\PassOptionsToPackage{table}{xcolor}
\documentclass[10pt,letterpaper]{article}
\usepackage[T1]{fontenc}
\usepackage{amsmath,amssymb}
\usepackage{amsthm}
\usepackage{trajectory2gym_preprint}
\usepackage{algorithm}
\usepackage{algpseudocode}
\theoremstyle{plain}\newtheorem{proposition}{Proposition}
\theoremstyle{remark}\newtheorem{remark}{Remark}
\usepackage{graphicx}
\usepackage{booktabs,multirow,array,tabularx}
\usepackage{float}
\usepackage{wrapfig}
\usepackage{needspace}
\usepackage{placeins}
\usepackage{xcolor}
\newcolumntype{P}[1]{>{\raggedright\arraybackslash}p{#1}}
\newcolumntype{Y}{>{\raggedright\arraybackslash}X}
\usepackage{enumitem}
\usepackage{hyperref}
\usepackage{url}
\usepackage{fontawesome5}
\hypersetup{
  colorlinks=true,
  linkcolor=black,
  citecolor=black,
  urlcolor=black,
  pdftitle={Trajectory2Gym: Decision-Grounded Compositional Interaction Synthesis for Tool-Use Agents},
  pdfauthor={Yihao Hu, Zhihao Wen, Yitong Lin, Siyuan Lu, Pan Wang, Wei Wu, Jiang-Ming Yang, Tao Lin}
}

\definecolor{T2GTableBlue}{HTML}{E1F2FF}
\definecolor{GroupGray}{HTML}{EEEEEE}
\definecolor{SurveyGreen}{HTML}{5F9F68}
\definecolor{SurveyOrange}{HTML}{CC7C48}
\newcommand{\tabsection}[2]{\multicolumn{#1}{@{}l}{\itshape #2}\\}
\newcommand{\TgMainTableStyle}{%
  \footnotesize\normalfont\rmfamily
  \renewcommand{\arraystretch}{1.0}}
\newcolumntype{R}{>{\raggedleft\arraybackslash}p{3.5em}}
\newcolumntype{N}{>{\raggedleft\arraybackslash}X}

\newcommand{\stitle}[1]{\par\noindent\textbf{#1}\enspace}

\newcommand{\chone}{Challenge~1}
\newcommand{\chtwo}{Challenge~2}

\newcommand{\env}{\mathcal{E}}

\newcommand{\method}{Trajectory2Gym}
\title{Trajectory2Gym: Decision-Grounded Compositional Interaction Synthesis for Tool-Use Agents}

\author{Yihao~Hu\textsuperscript{1,2,*} \and
Zhihao~Wen\textsuperscript{2,*} \and
Yitong~Lin\textsuperscript{2} \and
Siyuan~Lu\textsuperscript{1}\\[2pt]
Pan~Wang\textsuperscript{2} \and
Wei~Wu\textsuperscript{2,\ensuremath{\dagger}} \and
Jiang-Ming~Yang\textsuperscript{2} \and
Tao~Lin\textsuperscript{1,\ensuremath{\dagger}}}
\renewcommand{\TgAffiliations}{%
  \textsuperscript{1}Westlake University (\href{https://westlake.edu}{westlake.edu})
  \quad
  \textsuperscript{2}Ant International (\href{https://ant-intl.com}{ant-intl.com})}
\renewcommand{\TgAuthorNote}{%
  \textsuperscript{*}Equal contribution.\quad
  \textsuperscript{\ensuremath{\dagger}}Corresponding authors.}
\renewcommand{\TgLinks}{%
  \href{https://yihaohu0118.github.io/Trajectory2Gym-page/}{\faGlobe\hspace{0.35em}Homepage}%
  \hspace{1.25em}%
  \href{https://github.com/yihaohu0118/Trajectory2Gym}{\faGithub\hspace{0.35em}Code}%
  \hspace{1.25em}%
  \href{https://huggingface.co/datasets/mis2/Trajectory2gym}{\faDatabase\hspace{0.35em}Hugging Face}}
\date{October 24, 2026}

\begin{document}

\maketitle

\begin{abstract}
Successful tool-use trajectories are not necessarily useful supervision.
Even successful interactions may underrepresent multi-step dependencies or
contain actions that are unjustified by the information visible to the agent
at decision time.
We identify two coupled requirements for effective trajectory supervision:
compositional coverage, so that important observation--action dependencies
are represented, and decision grounding, so that each supervised action is
supported by the learner-visible interaction history.
We introduce Trajectory2Gym, which constructs executable interactions around
under-supported compositional structures and derives supervision only from
grounded decisions in their realized executions.
Structural-support selection allocates the resulting experience, while
parent-normalized supervised fine-tuning preserves the semantic interaction
as the unit of training weight.
This reframes trajectory generation as the deliberate construction of
learnable interaction experience, rather than the collection of successful
trajectories.
Across five Qwen3 scales, Trajectory2Gym improves Tau1, Tau2, and BFCL at
every tested scale, increasing the equal-weight three-benchmark mean by
16.9--29.8 percentage points.
The gains are strongest in stateful and multi-turn interaction, where later
decisions depend on earlier observations and feedback.
Our results suggest that scaling tool-use agents depends not only on
collecting more successful trajectories, but on constructing interactions
whose dependencies are represented and whose decisions are learnable from
the agent's visible history.
\end{abstract}

\section{Introduction}
\label{sec:introduction}

Ask a support agent to cancel two orders and update a third. A competent
agent looks up the account; when the email lookup fails, it asks for a
name and ZIP code and retries; it reads each order to check it is
pending, confirms with the user, and only then cancels and swaps items
using catalog-returned identifiers. Each call is simple; what must be
learned is their composition. We call such workflows \emph{compositional
interactions}: an observation, user update, or state change determines a
later decision. Tool-use agents face this composition constantly
\citep{yao2023react,schick2023toolformer,patil2023gorilla,
qin2024toollearningfoundationmodels,qu2024toollearninglargelanguage},
interactive benchmarks make it central to task success
\citep{yao2024taubench,barres2025tau2bench,patil2025bfcl}, and collected
or synthesized trajectories are its standard supervision
\citep{qin2023toolllm,zeng2023agenttuning}.

Recent pipelines show that \emph{what} interactions are collected matters,
yet the prevailing strategy still filters trajectories by task
completion or per-call execution
\citep{liu2024apigen,liu2025toolace,prabhakar2025apigenmt,xu2025toucan},
and fine-tuning directly on logged trajectories can even fall below the
initialization \citep{wu2026terminaluniverseturningagenttrajectories}.
Success certifies \emph{that} an episode ended well, not
\emph{what} it teaches or \emph{whether} each imitated action was
justified by what the agent could see. Two challenges follow.
\textbf{Challenge 1 (compositional coverage):} trajectories must
exercise cross-step dependencies with repeated support, yet in our pool
of 34,772 certified candidates, 40.9\% contain at most two supervised
decisions and only 10.0\% contain a certified decision after an
authentic error. \textbf{Challenge 2 (decision grounding):} each target
must follow from decision-time visible information; successful API
execution alone cannot establish this. A collector may use an unseen
identifier, and in Figure~\ref{fig:trajectory2gym-teaser} the API
accepts a stale two-item exchange as readily as the lamp-only exchange
finally requested by the user.

\begin{figure*}[t]
  \centering
  \includegraphics[width=0.98\textwidth]{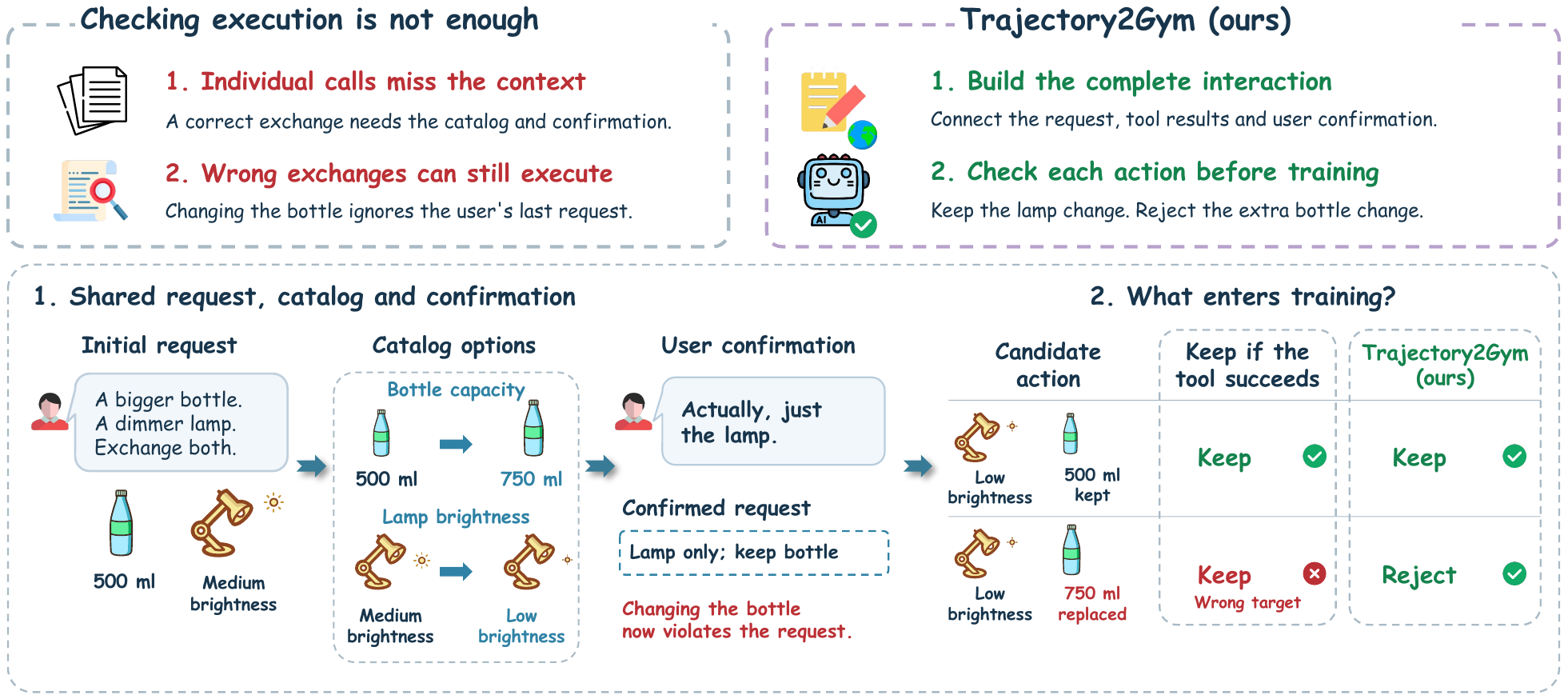}
  \caption{\textbf{Compositional interaction requires grounded decisions.}
Top: \method{} constructs linked interactions (1) and checks each training
action against that history (2). Bottom: a Tau2 Retail user requests a
larger bottle and a dimmer lamp, then confirms only the lamp. Both rules
judge the same two candidate actions. Filtering by execution keeps both
because native tool replay accepts both requests. \method{} uses catalog
evidence and the latest confirmation to keep the lamp exchange and reject
the additional bottle exchange. Green checks mark correct selection
decisions; the red cross marks erroneous retention. This schematic illustrates supervision
selection with paraphrased dialogue and a constructed second candidate.}
  \label{fig:trajectory2gym-teaser}
\end{figure*}

Our central claim is that
\textbf{what a trajectory teaches is decided not by its outcome, but by
the dependencies it realizes and the decisions its visible history can
justify}.
\method{} operationalizes this claim in three stages
(Figure~\ref{fig:trajectory2gym-overall}).
(1)~\emph{Compositional interaction synthesis} builds and executes tasks
around under-supported interaction paths in native environments, so that
information acquisition and recovery are demonstrated (\chone).
(2)~\emph{Grounded supervision} supervises only decisions certified by
visible evidence, execution, and workflow order, keeping authentic
errors as recovery context (\chtwo).
(3)~\emph{Structural-support learning} selects trajectories to cover
capability paths at a per-scale support depth and trains with
parent-normalized supervised fine-tuning (SFT). The stages are coupled:
synthesis without grounding would teach the collector's shortcuts,
grounding without synthesis would certify a pool where recovery and
discovery stay rare, and selection without both would only re-weight
that pool. 
With plain supervised fine-tuning from each public initialization,
the paired gains measure the complete post-training recipe.

Data built this way lifts every scale: \method{} improves Tau1, Tau2,
and BFCL on all five Qwen3 models from 1.7B to 30B, raising the
three-benchmark mean by 16.9 to 29.8 points. The lift outweighs model
size: the post-trained 4B model surpasses the public 30B-A3B
initialization on Tau2 (69.17 versus 28.33), and, against public
agent-training checkpoints used as references rather than matched
controls, the 8B model has the highest three-benchmark mean. 
Our contributions are:
\begin{itemize}[topsep=-1pt,itemsep=1.5pt,parsep=0pt,partopsep=0pt,leftmargin=*]
  \item We identify compositional coverage and decision grounding as
  coupled requirements and formalize them as a decision-level supervision
  mask over agent-visible histories (Section~\ref{sec:formulation}).
  \item We introduce \method{}, which synthesizes under-supported
  interactions in native environments and certifies targets from their
  executed, agent-visible histories (Sections~\ref{sec:synthesis}
  and~\ref{sec:certification}).
  \item We develop structural-support selection with parent-normalized
  learning and show that the framework improves all three benchmarks at
  each of five scales, with controlled comparisons of every stage
  (Sections~\ref{sec:training} and~\ref{sec:experiments}).
\end{itemize}

\section{Related Work}
\label{sec:related}

\stitle{Tool-use benchmarks and training.}
Single-turn benchmarks test tool and argument selection
\citep{li2023apibank,zhuang2023toolqadatasetllmquestion,
huang2024metatoolbenchmarklargelanguage,chen-etal-2024-eval};
later suites add nested, multi-hop, and long-context calls
\citep{basu-etal-2025-nestful,ye-etal-2025-toolhop,
zhong2025complexfuncbenchexploringmultistepconstrained,
chen2025acebenchwinsmatchpoint}.
Interactive benchmarks score whole episodes: multi-turn tool use
\citep{wang2024mintevaluatingllmsmultiturn,liu2023agentbench,
lu2024toolsandbox,guo2024stabletoolbench}, $\tau$-bench customer service
\citep{shi2026tauknowledge}, web, computer, and workplace tasks
\citep{zhou2023webarena,xie2024osworld,mialon2024gaia,trivedi2024appworld,
xu2025theagentcompanybenchmarkingllmagents,
li2026tooldecathlonbenchmarkinglanguage}, and MCP ecosystems
\citep{luo2025mcpuniverse,mo2025livemcpbench,
wang2025mcpbenchbenchmarkingtoolusingllm,
wu2025mcpmarkbenchmarkstresstestingrealistic}.
Our focus is on representing the cross-step dependencies that an
executed trajectory actually exercises.
Training uses synthesized demonstrations
\citep{parisi2022talm,tang2023toolalpaca,
yang2023gpt4toolsteachinglargelanguage},
cross-environment action traces
\citep{chen2023fireact,yin-etal-2024-agent,
chen-etal-2024-agent,zhang-etal-2025-xlam,song-etal-2024-agentbank,
xi2024agentgym}, function-calling specialization
\citep{lin2024hammerrobustfunctioncallingondevice,
wang2025toolgenunifiedtoolretrieval}, and exploration or reinforcement
learning with execution rewards
\citep{song2024eto,putta2024agentq,qian2025toolrl,
feng2025retoolreinforcementlearningstrategic,
li2025torlscalingtoolintegratedrl,
zhang2025nemotronresearchtooln1exploringtoolusinglanguage,
xi2025agentgymrl,zhang2026fissiongrpo,
kimiteam2026kimik2openagentic}.
SEAL couples diagnosis-guided training-interface adaptation with policy
updates \citep{hu2026seal}, while FACA uses next-turn user reactions for
local credit assignment \citep{zhao2026towards}.
Outcome-filtered training commonly treats successful trajectories as
imitable units, while failure-aware methods add step-level signals
\citep{wang2024learningfailureintegratingnegative,xiong-etal-2024-watch}.
DARC uses failure diagnosis to select recovery interventions
\citep{wang2026diagnosis}.
Our focus is target eligibility under the agent-visible history.

\stitle{Trajectory synthesis and curation.}
Trajectory construction is a data-design problem: verified executable
synthesis produces function-calling data at scale
\citep{yang2025toolmind}, multi-turn
extensions add dialogues, tool graphs, and simulated users
\citep{chen2025facilitatingmultiturnfunctioncalling,yin-etal-2025-magnet,
wang-etal-2025-toolflow}, and web and GUI agents mine tutorials,
indirect knowledge, reverse task synthesis, and exploration
\citep{xu2025agenttrekagenttrajectorysynthesis,
ou2024synatraturningindirectknowledge,sun-etal-2025-os,
pahuja-etal-2025-explorer}.
Capability-oriented pipelines target changing or infeasible intents
\citep{wang2026trajectory2task}, active visual acquisition
\citep{zhang2026visualseeker}, or compositional tool graphs
\citep{lu2023chameleonplugandplaycompositionalreasoning,
shen2024taskbenchbenchmarkinglargelanguage}; others scale environments
\citep{hu2025agentgenenhancingplanningabilities,song2025envscaler,
tu2025scaleenv,fang2025agentscaler,chen2026dreamgym,
wang2026agentworldmodel,tian2026astra},
rebuild executable workspaces from logged trajectories and re-solve them
under hidden tests,
or standardize formats \citep{song2026agentdataprotocolunifying}.
Code2Skill derives and verifies reusable procedural skills from source
code \citep{tong2026grounded}.
Executable tool-use synthesis often verifies outcomes or calls.
For software-engineering agents, Patches-to-Trajectories curates
prefix-grounded, efficient steps using privileged reference patches
\citep{ma2026patchestotrajectories}.
Curation is complementary: data selection favors quality over volume
\citep{zhou2023limaalignment,liu2024makesgooddataalignment}, TopoCurate
selects by interaction topology \citep{yang2026topocurate}, and imitation
learning separates privileged demonstrator knowledge from student
observations \citep{ross2011dagger,chen2019learningcheating}.
\method{} combines deficit-guided synthesis in native tool-use
environments, policy-visible target eligibility, and structural-support
allocation across learner scales.

\section{Problem Formulation}
\label{sec:formulation}

Let $\env$ be a tool-use environment and $V$ the projection of an
interaction history onto what the agent can see: we take the perspective
of a developer who records the complete log during collection but trains
a student that acts only through the native interface. Given a request
$x$ and tool schema $\mathcal T_t$, the policy acts from
\begin{equation}
    c_t
    =
    \bigl(x,\mathcal T_t,V(H_{<t})\bigr),
    \qquad
    a_t\sim\pi_\theta(\cdot\mid c_t),
    \label{eq:tool-use-policy}
\end{equation}
where $H_{<t}$ is the history preceding decision $t$; the environment
executes $a_t$, returns an observation, and updates its state, and a
completed trajectory $\tau$ receives a terminal outcome
$R_{\env}(\tau)\in\{0,1\}$ from the native checker.

For learner scale $s$ we construct a supervised trajectory set
\begin{equation}
    \mathcal D_s
    =
    \{(\tau_i,\mathbf m_i)\}_{i=1}^{N_s},
    \qquad
    \mathbf m_i\in\{0,1\}^{T_i},
    \label{eq:trajectory-supervision}
\end{equation}
where $T_i$ is the number of assistant decisions in $\tau_i$ and
$m_{it}=1$ marks decision $t$ as a supervised target; the full visible
interaction remains context, and the mask only selects targets.
Two properties are required. \emph{Compositional coverage} (\chone):
the training data provide sufficient support for the observation--action
dependencies and multi-step structures the policy should acquire.
\emph{Decision grounding} (\chtwo): each supervised action is supported
by the realized interaction and by its decision-time context $c_t$.
Our goal is one procedure that constructs $\mathcal D_s$ with both
properties for every learner scale (Section~\ref{sec:method}).

\section{Method}
\label{sec:method}

\begin{figure*}[t]
  \centering
  \includegraphics[width=0.98\textwidth]{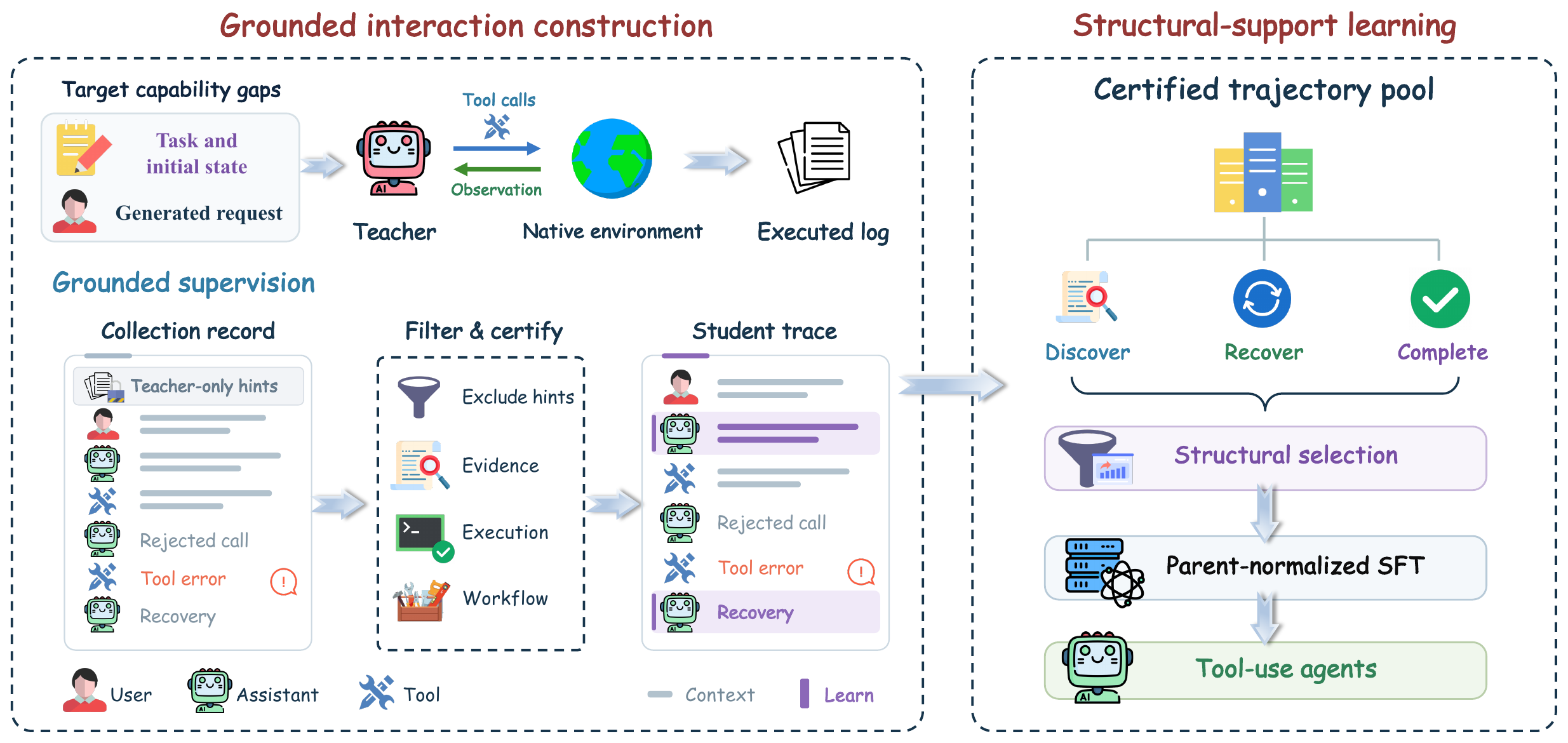}
  \caption{\textbf{Overview of \method{}.}
  % (1)~Compositional interaction synthesis builds tasks around capability
  % gaps and executes them in the native environment; (2)~grounded
  % supervision excludes teacher-only hints and certifies each decision by
  % visible evidence, execution, and workflow order, keeping authentic
  % errors as recovery context; (3)~structural-support learning selects
  % from the certified pool for parent-normalized SFT.
  }
  \label{fig:trajectory2gym-overall}
\end{figure*}

Figure~\ref{fig:trajectory2gym-overall} connects three stages.
(1)~Compositional interaction synthesis (Section~\ref{sec:synthesis},
for~\chone) turns under-supported paths into executed native
interactions; (2)~grounded supervision (Section~\ref{sec:certification},
for~\chtwo) turns each executed log into a student context and a set
of certified targets; and (3)~structural-support learning
(Section~\ref{sec:training}) decides how often each certified structure
enters training and how its rendered examples are weighted
(properties proved in Appendix~\ref{app:theory}).

\subsection{Compositional Interaction Synthesis}
\label{sec:data_pipeline}
\label{sec:synthesis}
\label{sec:capability_mining}

A shared capability vocabulary $\mathcal K$ covers semantic actions and
arguments, observation outcomes, recovery, branching, handoff,
termination, and horizon; native tools and observations map to these
roles, so one path can be supported by different backends. An ordered
path $p=(\kappa_1,\ldots,\kappa_\ell)$ composes these requirements within
one interaction. A generator $G$ constructs a request $x$ and compatible
initial state $s_0$ for a target path, and a teacher $\pi_{\mathrm T}$
executes the task in the native environment:
\begin{equation}
    (x,s_0)\sim G(\cdot\mid p,\env),
    \qquad
    \tau\sim P_{\env}(\cdot\mid x,s_0,\pi_{\mathrm T}).
    \label{eq:path-conditioned-generation}
\end{equation}
Each trajectory is assigned one canonical path $p(\tau)$ for support
accounting, and the support of $p$ in a selected set $\mathcal S$ is
\begin{equation}
    n_{\mathcal S}(p)
    =
    \sum_{\tau\in\mathcal S}
    \mathbf{1}[p(\tau)=p].
    \label{eq:path-support}
\end{equation}

The support scenario of Section~\ref{sec:introduction} is the path
lookup, error, clarification, retry, order read, confirmation,
cancellation, update.
We target paths whose support falls below the
scale's target (Section~\ref{sec:training}) and condition the generator
on them; rare structures need this explicit route, since uniform
selection of $B$ out of $N$ candidates misses a path represented once
with probability $1-B/N$ (Proposition~\ref{prop:random}). Construction
establishes the entities, prerequisites, and information sources each
path needs (a transaction task may identify the customer but require a
lookup for the transaction identifier; a recovery variant creates a
discoverable native rejection) and binds required values to visible
sources (the request, a user exchange, or a tool result), since only
such values can become grounded targets.

Native execution supplies authentic observations, state changes, and
rejections; teacher-only guidance stays out of the student trace. A path
is certified only when its action and observation dependencies appear in
the executed log, and only certified structures update the support
inventory and guide later synthesis rounds; a mistaken generator or
teacher proposal therefore surfaces as an uncertified structure, not a
training target (templates in Appendix~\ref{app:synthesis}).

\subsection{Grounded Supervision}
\label{sec:certification}

\stitle{Visible context.}
Grounded supervision turns the execution log into student contexts and
eligible targets, checking visibility, local execution, and workflow
completion in that order. The decision context $c_t$ in
Eq.~\eqref{eq:tool-use-policy} contains only events exposed through the
native agent interface; the full log remains available for
certification, but teacher hints and private events never enter the
student context.

\stitle{Target eligibility.}
We distinguish native execution validity $e_t\in\{0,1\}$ from
decision-time grounding $v_t\in\{0,1\}$, with local eligibility
\begin{equation}
    g_t=e_t\,v_t.
    \label{eq:local-eligibility}
\end{equation}
For a tool call, $e_t=1$ requires authenticated non-error execution or
native action-only validation; $v_t=1$ requires the tool to be available
to the agent and identifier-like arguments to be supported by preceding
visible information (discoverable tools must first appear in a visible
retrieval result). An identifier known only to the collector cannot
support a target. When the demonstrated action remains dependent on
collector-private information after conditioning on $c_t$, a policy
restricted to $c_t$ faces the additional uncertainty characterized by
Proposition~\ref{prop:grounding}. A rejected call and its authentic
error remain visible \emph{context} without imitation loss, while the
grounded recovery that follows becomes a target
(Figure~\ref{fig:trajectory2gym-grounded-recovery} in
Appendix~\ref{app:native-interaction-cases}).

\stitle{Workflow completion.}
Let $R_\tau$ be native terminal success, $q_\tau$ the task's ordered
requirements, and $A_\tau$ the canonicalized sequence of relevant user
operations and locally eligible assistant actions, with roles,
arguments, and order preserved. The final supervision mask is
\begin{equation}
    m_t
    =
    R_\tau\,
    \mathbf{1}[q_\tau\preceq A_\tau]\,
    g_t,
    \label{eq:certified-supervision}
\end{equation}
where $\preceq$ denotes ordered subsequence inclusion; other actions may
occur between required operations, and $q_\tau=\varnothing$ when the
native checker fully specifies the task. In the constructed contrast of
Figure~\ref{fig:trajectory2gym-teaser}, both calls are executable;
$v_t$ checks visible tool and argument provenance, not the later user's
task scope. When encoded by the native task, $R_\tau$ and $q_\tau$
check that scope and the required order. Rejected calls are masked but
their authentic errors remain visible context. Trajectories with
$\sum_t m_t>0$ enter the eligible pool; role-specific checks are in
Appendix~\ref{app:certification}.

\subsection{Structural-Support Learning}
\label{sec:training}
\label{sec:assembly}
\label{sec:selection}
\label{sec:rendering}
\label{sec:sft}

\stitle{Structural selection.}
We assemble $\mathcal D_s$ from the certified pool to preserve task
breadth, cover capability paths, and give repeated support to
under-represented structures such as discovery, recovery, and
completion. For scale $s$, let $d_s$ be the target number of distinct
trajectories per feasible canonical path; it grows with learner scale
(1, 2, 3, 4, and 8 from 1.7B to 30B-A3B), so larger learners see more
realizations of each path per epoch. The remaining deficit of a selected
set $\mathcal S$ is
\begin{equation}
    \delta_s(p;\mathcal S)
    =
    \bigl[d_s-n_{\mathcal S}(p)\bigr]_+.
    \label{eq:support-deficit}
\end{equation}
Selection starts from a fixed certified core, adds one completing
exemplar per training task, closes uncovered atoms and paths, and then
adds trajectories that reduce these deficits. Under a pure cardinality
budget, this deficit stage exactly maximizes capped support
$\sum_p\min(n_{\mathcal S}(p),d_s)$; the complete selector also has task
and per-task constraints (Proposition~\ref{prop:selection};
Appendix~\ref{app:selection}). Without the cap, the objective is modular
and indifferent to which paths are covered.

\stitle{Parent-normalized SFT.}
Selected trajectories are rendered through native prompts, roles, tool
schemas, and visible observations. One semantic trajectory may yield
several training examples; under per-example averaging a parent's weight
is proportional to its number of rendered children
(Proposition~\ref{prop:parent}), so long multi-turn interactions would
dominate the loss. We therefore define each trajectory's contribution at
the parent level. With $(h_u,y_u)_{u=1}^{N_\tau}$ the certified
next-token pairs of $\tau$ counted across rendered examples,
\begin{equation}
    \ell_\theta(\tau)
    =
    -\frac{1}{N_\tau}
    \sum_{u=1}^{N_\tau}
    \log\pi_\theta(y_u\mid h_u),
    \qquad
    \mathcal L_{\mathrm{SFT}}(\theta)
    =
    \frac{1}{|\mathcal D_s|}
    \sum_{\tau\in\mathcal D_s}
    \ell_\theta(\tau).
    \label{eq:sft-objective}
\end{equation}
Each parent thus receives equal weight, non-target visible messages
remain context, and regrouping targets into different children leaves
the objective unchanged (Appendix~\ref{app:parent-normalization}).

For illustration, consider a corpus of two interactions that render into
three examples and one example, respectively. Row averaging assigns them
total weights of $3/4$ and $1/4$, whereas parent normalization assigns
$1/2$ to each. Each parent's weight is distributed across its examples
in proportion to their certified target-token counts, keeping semantic
training mass independent of rendering multiplicity.

\section{Experiments}
\label{sec:experiments}

We ask whether data built for compositional coverage and decision
grounding improves tool use at every scale with SFT alone (\textbf{RQ1};
Section~\ref{sec:main-results}), how the resulting models compare with
public checkpoints and where the gains concentrate (\textbf{RQ2},
\textbf{RQ3}; Sections~\ref{sec:main-results}
and~\ref{sec:bfcl-results}), and which stages contribute under matched
ablations (\textbf{RQ4}; Section~\ref{sec:controlled});
Sections~\ref{sec:support-allocation} and~\ref{sec:qualitative-grounding}
add a depth sensitivity test and a same-task case study.

% Tables 1 and 2 sit one subsection early in the source so that each
% prints just before its discussion.
\begin{table}[t]
\caption{\textbf{Tau1 and Tau2 results at all five scales} (Pass@1, \%;
higher is better). Blue rows are \method{}-trained; $\Delta$ is the
paired gain in Mean, the equal average of the two benchmark averages.
Tau1 Airline is a
seen-task, transductive setting (Appendix~\ref{app:protocol}).
Agent-training checkpoints are re-evaluated on the same complete suites:
Simia \citep{li2025simia}, MUA-RL \citep{zhao2025muarl}, LoopTool
\citep{zhang2025looptool}, EnvFactory \citep{xu2026envfactory}, and MidTool-RL
\citep{jiang2026midtool} (all evaluated checkpoints in
Appendix~\ref{app:public-checkpoints}); closed-source rows are reported
references.}
\label{tab:current-tau}
\centering
\TgMainTableStyle
\setlength{\tabcolsep}{1.1pt}
\begin{tabularx}{\linewidth}{@{}l*{7}{N}R>{\raggedleft\arraybackslash}p{3.4em}@{}}
\toprule
\multirow{2}{*}{Model} & \multicolumn{3}{c}{Tau1}
 & \multicolumn{4}{c}{Tau2} & \multirow{2}{*}{\textbf{Mean}} & \multirow{2}{*}{$\Delta$} \\
\cmidrule(lr){2-4}\cmidrule(lr){5-8}
 & Retail & Airline & Avg. & Retail & Airline & Telecom & Avg. & & \\
\midrule
\tabsection{10}{Closed-source references}
GPT-5.2 (high) & 75.44 & 63.00 & 69.22 & 81.58 & 83.00 & 89.69 & 84.76 & 76.99 & \\
Claude-4.5-Opus (high) & 79.39 & 67.50 & 73.45 & 79.61 & 84.00 & 92.32 & 85.31 & 79.38 & \\
\midrule
\tabsection{10}{Re-evaluated agent-training checkpoints}
Simia Tau-SFT (Qwen3-8B) & 45.22 & 28.00 & 36.61 & 62.50 & 50.00 & 15.00 & 42.50 & 39.55 & \\
MUA-RL-8B & 51.30 & 18.00 & 34.65 & 55.00 & 15.00 & 7.50 & 25.83 & 30.24 & \\
MUA-RL-32B & 66.09 & 42.00 & 54.04 & 70.00 & 30.00 & 35.00 & 45.00 & 49.52 & \\
LoopTool-8B & 39.13 & 8.00 & 23.57 & 40.00 & 15.00 & 17.50 & 24.17 & 23.87 & \\
EnvFactory-8B (thinking) & 38.26 & 42.00 & 40.13 & 42.50 & 25.00 & 12.50 & 26.67 & 33.40 & \\
MidTool-RL-8B & 41.74 & 8.00 & 24.87 & 57.50 & 10.00 & 20.00 & 29.17 & 27.02 & \\
\midrule
\tabsection{10}{Qwen3 initialization and \method{} post-training}
Qwen3-1.7B & 13.04 & 4.00 & 8.52 & 10.00 & 20.00 & 40.00 & 23.33 & 15.93 & \\
\rowcolor{T2GTableBlue}
\textbf{\quad + \method{}} & 40.87 & 28.00 & 34.43 & 42.50 & 20.00 & 50.00 & 37.50 & \textbf{35.97} & $+20.04$ \\
Qwen3-4B & 30.43 & 18.00 & 24.22 & 20.00 & 15.00 & 22.50 & 19.17 & 21.70 & \\
\rowcolor{T2GTableBlue}
\textbf{\quad + \method{}} & 50.43 & 38.00 & 44.22 & 72.50 & 50.00 & 85.00 & 69.17 & \textbf{56.70} & $+35.00$ \\
Qwen3-8B & 41.74 & 14.00 & 27.87 & 45.00 & 10.00 & 20.00 & 25.00 & 26.44 & \\
\rowcolor{T2GTableBlue}
\textbf{\quad + \method{}} & 58.26 & 44.00 & 51.13 & 77.50 & 50.00 & 95.00 & 74.17 & \textbf{62.65} & $+36.21$ \\
Qwen3-14B & 46.09 & 18.00 & 32.04 & 40.00 & 20.00 & 15.00 & 25.00 & 28.52 & \\
\rowcolor{T2GTableBlue}
\textbf{\quad + \method{}} & 66.09 & 50.00 & 58.04 & 85.00 & 75.00 & 82.50 & 80.83 & \textbf{69.44} & $+40.92$ \\
Qwen3-30B-A3B & 54.78 & 34.00 & 44.39 & 57.50 & 15.00 & 12.50 & 28.33 & 36.36 & \\
\rowcolor{T2GTableBlue}
\textbf{\quad + \method{}} & 69.57 & 64.00 & 66.78 & 87.50 & 70.00 & 95.00 & 84.17 & \textbf{75.47} & $+39.11$ \\
\bottomrule
\end{tabularx}
\end{table}

\subsection{Experimental Setup}

We evaluate \method{} on Tau1, Tau2, and BFCL with five Qwen3
scales (1.7B, 4B, 8B, 14B, and 30B-A3B). Each model starts from its public
initialization and receives one epoch of full-parameter, parent-normalized
SFT on its scale-conditioned view (11.6k to 19.9k trajectories;
Appendix~\ref{app:data-statistics}). Tau1 and Tau2 are scored by native
task success with equal-weight domain averaging; BFCL reports its eight
categories and the three-group Overall of Gecko.
The evidence standard is the paired comparison: each post-trained model
is compared with its own initialization under the same task population,
native interface, decoding, user simulator, and scorer, with each task
run once. We additionally evaluate public agent-training checkpoints
from five families, trained through trajectory synthesis, supervised learning,
or interactive learning, on the complete Tau1, Tau2, and BFCL suites with
the same task populations and scorers and each model's native prompting
and tool-call format (Appendix~\ref{app:public-checkpoints}); the tables
below show one 8B-scale checkpoint from each of five families, plus
MUA-RL-32B.
Closed-source systems and ToolACE-2-8B are reported references
(Appendix~\ref{app:published-refs}). Neither group is a matched or
equal-compute comparison: training data, model families, and context
budgets differ, which precludes component-level attribution across
methods.

\begin{table}[t]
\caption{\textbf{BFCL results} (\%; higher is better). Overall equally
averages the four-category Non-live mean, the three-category Live mean,
and Multi-turn Base. Blue rows are \method{}-trained; $\Delta$ is the
paired gain in Overall. Closed-source rows and ToolACE-2-8B follow Gecko
Table~3 \citep{zhang2026gecko} without GATS test-time refinement; the
other agent-training checkpoints are re-evaluated as in
Table~\ref{tab:current-tau}. Par., Mult., and Irrel. abbreviate the
Parallel, Multiple, and Irrelevance categories; a Parallel score of 0.00
is 0/100 correct, not missing data.}
\label{tab:current-bfcl}
\centering
\TgMainTableStyle
\setlength{\tabcolsep}{0.7pt}
\begin{tabularx}{\linewidth}{@{}l*{8}{N}>{\raggedleft\arraybackslash}p{4.0em}>{\raggedleft\arraybackslash}p{3.4em}@{}}
\toprule
\multirow{2}{*}{Model} & \multicolumn{4}{c}{Non-live} & \multicolumn{3}{c}{Live}
 & Multi-turn & \multirow{2}{*}{\textbf{Overall}} & \multirow{2}{*}{$\Delta$} \\
\cmidrule(lr){2-5}\cmidrule(lr){6-8}\cmidrule(lr){9-9}
 & Simple & Par. & Mult. & Irrel. & Simple & Mult. & Irrel. & Base & & \\
\midrule
\tabsection{11}{Closed-source references}
GPT-5.5 & 91.50 & 90.50 & 88.50 & 88.75 & 73.64 & 73.79 & 83.82 & 60.00 & 75.63 &  \\
Gemini-3.0-Pro & 94.50 & 91.00 & 94.00 & 82.50 & 87.60 & 80.44 & 73.19 & 69.00 & 79.97 &  \\
\midrule
\tabsection{11}{Agent-training references}
ToolACE-2-8B & 88.00 & 92.50 & 92.50 & 95.41 & 70.93 & 79.01 & 84.80 & 49.00 & 73.12 &  \\
Simia Tau-SFT (Qwen3-8B) & 88.50 & 0.00 & 92.00 & 85.83 & 58.14 & 38.14 & 88.69 & 1.00 & 43.08 &  \\
MUA-RL-8B & 94.50 & 0.00 & 94.00 & 81.67 & 74.42 & 72.30 & 84.16 & 30.00 & 58.17 &  \\
MUA-RL-32B & 94.50 & 66.00 & 96.00 & 84.17 & 84.50 & 72.87 & 86.43 & 51.00 & 72.48 &  \\
LoopTool-8B & 97.50 & 97.00 & 96.00 & 81.67 & 83.72 & 75.90 & 74.21 & 35.00 & 68.66 &  \\
EnvFactory-8B (thinking) & 89.50 & 96.00 & 96.00 & 75.00 & 85.27 & 76.85 & 64.48 & 68.00 & 77.55 &  \\
MidTool-RL-8B & 96.00 & 85.00 & 93.00 & 74.17 & 86.82 & 73.06 & 68.55 & 58.00 & 73.73 &  \\
\midrule
\tabsection{11}{Qwen3 initialization and \method{} post-training}
Qwen3-1.7B & 91.00 & 84.00 & 93.00 & 85.83 & 64.34 & 61.86 & 75.34 & 12.00 & 55.88 &  \\
\rowcolor{T2GTableBlue}
\textbf{\quad + \method{}} & 85.50 & 79.00 & 90.00 & 88.33 & 53.49 & 68.12 & 75.57 & 48.00 & \textbf{66.48} & $+10.60$ \\
Qwen3-4B & 93.50 & 92.00 & 95.00 & 86.67 & 76.74 & 72.87 & 78.28 & 21.00 & 62.92 &  \\
\rowcolor{T2GTableBlue}
\textbf{\quad + \method{}} & 92.50 & 90.00 & 97.00 & 85.83 & 68.22 & 74.19 & 71.27 & 62.00 & \textbf{74.85} & $+11.93$ \\
Qwen3-8B & 95.50 & 94.00 & 97.00 & 83.33 & 73.64 & 73.81 & 72.85 & 36.00 & 67.30 &  \\
\rowcolor{T2GTableBlue}
\textbf{\quad + \method{}} & 95.00 & 94.00 & 97.00 & 89.17 & 74.42 & 74.95 & 72.85 & 68.00 & \textbf{78.62} & $+11.32$ \\
Qwen3-14B & 93.00 & 95.00 & 93.00 & 90.00 & 79.84 & 76.66 & 75.34 & 46.00 & 72.01 &  \\
\rowcolor{T2GTableBlue}
\textbf{\quad + \method{}} & 93.50 & 94.00 & 96.00 & 90.00 & 73.64 & 79.32 & 83.03 & 67.00 & \textbf{79.68} & $+7.67$ \\
Qwen3-30B-A3B & 94.00 & 94.00 & 97.00 & 80.83 & 82.95 & 74.19 & 71.72 & 65.00 & 77.58 &  \\
\rowcolor{T2GTableBlue}
\textbf{\quad + \method{}} & 96.50 & 94.00 & 98.00 & 89.17 & 69.77 & 76.09 & 83.26 & 72.00 & \textbf{80.93} & $+3.35$ \\
\bottomrule
\end{tabularx}
\end{table}

\subsection{Stateful Tool Use (RQ1 to RQ3)}
\label{sec:main-results}

\stitle{\method{} raises the Tau1 and Tau2 mean by 20 to 41 points at
every scale.}
All five initializations improve on both benchmarks
(Table~\ref{tab:current-tau}); the paired gain is smallest at 1.7B
(20.04) and between 35 and 41 points from 4B upward, and supervision
changes what a given capacity delivers: the post-trained 4B model reaches
69.17 on Tau2, against 28.33 for the public 30B-A3B initialization, and
the post-trained 30B-A3B model reaches 84.17, within 1.2 points of the
reported closed-source references, 84.76 and 85.31 (RQ1).
Against re-evaluated agent-training checkpoints, our 8B model reaches
74.17 on Tau2, at least 28 points above every one of them, 14B and
32B ones included, and on Tau1 trails only MUA-RL-32B, which our 14B and
30B-A3B models exceed (RQ2). The gain is also broad: across five domains
and five scales, 24 of 25 paired comparisons improve, with the largest
movement in Tau2 Telecom, which rises from 12.50 to 95.00 at 30B-A3B,
and the exception, Tau2 Airline at 1.7B, is unchanged at 20.00 (RQ3).
Taken together, standard SFT on decision-grounded compositional data,
with no reinforcement learning, makes every initialization from 4B
upward a stronger stateful agent than any evaluated checkpoint,
RL-trained and 32B ones included (Tau mean 56.70 to 75.47 versus at most
49.52); the margin is widest on Tau2, whose multi-step, state-dependent
tasks are where compositional coverage and decision grounding are meant
to act. These gains reflect the complete recipe, whose training view
also changes with scale (Appendix~\ref{app:data-statistics}); the
matched ablations of Section~\ref{sec:controlled} separate its stages.

\subsection{Function Calling and Multi-Turn Behavior (RQ2, RQ3)}
\label{sec:bfcl-results}

\stitle{BFCL gains concentrate in multi-turn tool use.}
Overall improves at every scale, driven by Multi-turn Base, which rises
by 7 to 41 points, consistent with the cross-step dependencies that
synthesis targets (RQ3): every post-trained model from 4B upward exceeds
the reported GPT-5.5 (60.00) on Multi-turn Base, and our 30B-A3B model
posts 80.93 Overall, numerically above the 79.97 reported for
Gemini-3.0-Pro under its source protocol. Single-turn categories start
high and mostly hold; the one consistent exception is Live Simple, down 6 to 13 points
at four of five scales, which we hypothesize reflects a shift toward
information gathering on queries that expect one immediate call.
Function-calling skill does not transfer to stateful tasks, which also
require later actions to stay consistent with accumulated evidence and
state: EnvFactory-8B nearly matches our 8B model on BFCL Overall (77.55
versus 78.62) and ties it on Multi-turn Base (68.00), yet trails it by
47.5 points on Tau2 (Table~\ref{tab:current-tau}). Across all three
benchmarks, our 8B model therefore has the highest equal-weight mean of
any evaluated checkpoint, 67.97 against at most 57.17 (RQ2).

\subsection{Ablation Study (RQ4)}
\label{sec:controlled}
\label{sec:ablation}

\begin{table}[t]
\centering
\caption{\textbf{Matched component ablations on Qwen3-8B} (\%).
Each pair compares one \method{} design choice with its stage-matched
alternative.
Mean equally averages Tau1, Tau2, and BFCL; Gain is the \method{} choice
minus its alternative, so positive values favor the proposed component.
Matching conditions are given in Appendix~\ref{app:controls}.}
\label{tab:ablation}
\label{tab:controlled-analysis}
\small
\setlength{\tabcolsep}{4.2pt}
\renewcommand{\arraystretch}{0.95}
\begin{tabular*}{\linewidth}{@{\extracolsep{\fill}}lrrrrr@{}}
\toprule
Setting & Tau1 & Tau2 & BFCL & Mean & Gain \\
\midrule
\rowcolor{GroupGray}
\multicolumn{6}{@{}l}{\textit{Compositional interaction synthesis}} \\
Existing candidate pool & 51.00 & 75.83 & 80.39 & 69.08 & -- \\
\rowcolor{T2GTableBlue}
\textbf{+ Targeted expansion} & \textbf{58.61} & \textbf{81.67}
& 77.61 & \textbf{72.63} & \textbf{+3.55} \\
\rowcolor{GroupGray}
\multicolumn{6}{@{}l}{\textit{Decision-grounded supervision}} \\
Loose success-derived targets & 48.13 & 71.67 & 75.86 & 65.22 & -- \\
\rowcolor{T2GTableBlue}
\textbf{+ Decision grounding} & \textbf{51.13} & \textbf{74.17}
& \textbf{78.62} & \textbf{67.97} & \textbf{+2.75} \\
\rowcolor{GroupGray}
\multicolumn{6}{@{}l}{\textit{Structural-support selection}} \\
Random selection & 51.13 & 74.17 & \textbf{79.40} & 68.23 & -- \\
\rowcolor{T2GTableBlue}
\textbf{+ Structural selection} & \textbf{57.00} & \textbf{75.83}
& 78.81 & \textbf{70.55} & \textbf{+2.32} \\
\rowcolor{GroupGray}
\multicolumn{6}{@{}l}{\textit{Parent-normalized training mass}} \\
Rendered-row averaging & 49.52 & 52.50 & 77.79 & 59.94 & -- \\
\rowcolor{T2GTableBlue}
\textbf{+ Parent normalization} & \textbf{51.13} & \textbf{74.17}
& \textbf{78.62} & \textbf{67.97} & \textbf{+8.04} \\
\bottomrule
\end{tabular*}
\end{table}

\begin{figure}[t]
  \centering
  \begin{minipage}[c]{0.5\textwidth}
    \includegraphics[width=\linewidth]{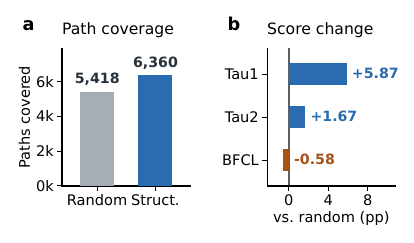}
  \end{minipage}\hfill
  \begin{minipage}[c]{0.47\textwidth}
    \caption{\textbf{Structural vs.\ random selection at a fixed 8B
    budget} (14,819 parents, equal target tokens). (a) Canonical paths
    covered. (b) Score change of structural selection (points).
    Structural selection covers 942 more paths and improves both Tau
    benchmarks, with BFCL essentially unchanged.}
    \label{fig:fixed8b-diagnostics}
    \label{fig:app-structural-selection}
  \end{minipage}
\end{figure}

\stitle{All four 8B matched comparisons favor the proposed choices.}
Each pair in Table~\ref{tab:ablation} changes one design choice at
\mbox{Qwen3-8B} and keeps the rest of the procedure matched as closely as
possible (Appendix~\ref{app:controls}); each arm is a single run, and one
Tau2 Airline task moves the three-benchmark mean by 0.56 points, so gains
of a few points are directional. All four choices raise the mean in their
stage-matched comparisons. Targeted expansion mainly strengthens the
stateful Tau tasks, but its pair changes candidate supply without fixing
the target-token budget, so it does not isolate targeting at fixed data
volume. Grounding improves decision-level supervision, and structural
selection covers 942 more canonical paths at the same budget
(Figure~\ref{fig:fixed8b-diagnostics}). Parent normalization gives
the largest isolated gain (8.04 points): the semantic trajectories,
rendered examples, and target masks are unchanged, so this pair isolates
how optimization mass is distributed and supports treating a semantic
interaction, rather than a rendered row, as the unit of optimization.
Because each pair intervenes at a different stage, the gains are not
additive.

\subsection{Sensitivity Analysis}
\label{sec:support-allocation}

\begin{table}[!ht]
\centering
\caption{\textbf{Sensitivity to structural-support depth at Qwen3-8B} (\%).
Depth controls the amount of repeated support allocated to certified
interaction structures. These control-specific runs use a different
training view from the headline 8B checkpoint of
Tables~\ref{tab:current-tau} and~\ref{tab:current-bfcl}.}
\label{tab:support-depth}
\label{tab:app-fixed8b-depth}
\small
\setlength{\tabcolsep}{5pt}
\renewcommand{\arraystretch}{1.10}
\begin{tabular*}{\linewidth}{@{\extracolsep{\fill}}crrrrrr@{}}
\toprule
Depth & Trajectories & Target tokens & Tau1 & Tau2 & BFCL & Mean \\
\midrule
2 & 13,986 & 18.02M & 55.61 & \textbf{82.50} & 78.74 & \textbf{72.28} \\
3 & 14,819 & 19.42M & \textbf{57.00} & 75.83 & 78.81 & 70.55 \\
4 & 15,805 & 20.61M & 54.00 & 75.00 & \textbf{79.31} & 69.44 \\
\bottomrule
\end{tabular*}
\end{table}

\stitle{More structural support is not automatically better.}
Performance does not increase monotonically with support depth
(Table~\ref{tab:support-depth}):
the three-benchmark mean is highest at the shallowest setting (72.28
versus 70.55 and 69.44) even as trajectories and target tokens continue to
grow. The decline is concentrated in Tau2 (82.50 to 75.00), while BFCL
remains comparatively stable. The headline schedule was fixed before these
runs, so this is a sensitivity test rather than checkpoint selection;
because depth also changes data volume, it does not isolate path support
from token count. In short, structural support is an allocation problem
rather than a simple data-scaling rule, and we set depth per scale by a
predeclared schedule (Appendices~\ref{app:support-analysis}
and~\ref{app:selection}).

The structural-selection control and the depth sweep answer complementary
questions. The former changes the distribution of represented structures
at matched parent and target-token budgets; the latter increases
repetition together with supervision volume. A gain from the first
intervention can therefore coexist with a decline from the second.
In terms of the construction objective, adding an uncovered path expands
breadth, whereas adding another interaction to a covered path increases
support. This distinction links the empirical results to the separate
coverage and deficit-allocation stages of the selector.

\subsection{Qualitative Analysis}
\label{sec:qualitative-grounding}

Figure~\ref{fig:trajectory2gym-baseline-comparison} contrasts three 8B
agents on a Retail workflow whose two updates must occur in order.
Simia has seen the requested address in an order record, yet keeps
querying the user profile and transfers to a human: exposed evidence
never reaches the decision that needs it. MUA-RL holds the right address
but edits the item first, which changes the order status and blocks the
address update, an ordering error invisible to per-call checks because
the edit itself executes cleanly. \method{} confirms both changes,
updates the address while the order is pending, then edits the items.
These are exactly the behaviors our construction supervises: grounding
admits only targets whose arguments trace to visible evidence, and
synthesis with workflow-order certification builds and certifies such
dependent-state paths. Appendix~\ref{app:qualitative} gives the traces
and six further paired cases.

The recovery example in Appendix~\ref{app:native-interaction-cases}
complements this ordering case.
After an email lookup returns ``user not found,'' the agent obtains the
user's name and ZIP code and resumes the workflow through an alternative
lookup.
The failed call and its authentic response remain visible context, while
the subsequent grounded decisions supply supervision.
This preserves the reason for recovery without assigning imitation loss
to the failed attempt.
Together, the two cases illustrate complementary dependencies: preserving
a prerequisite before changing state, and using observed feedback to
choose the next action.
Both require the training representation to retain the policy-visible
sequence rather than reduce the interaction to a collection of successful
calls.
A successful call may acquire new evidence, satisfy a prerequisite, or
complete a recovery; its training role therefore depends on its place
in the interaction, not execution validity alone.

The cases in Appendix~\ref{app:case-studies} cover travel and
connectivity workflows.
Returned booking identifiers support insurance, invoice, or support
operations; user-reported device state and connection speed guide
connectivity recovery.
These dependencies connect decisions through observations and user-mediated
state changes, rather than tool names alone.

\begin{figure}[!ht]
  \centering
  \includegraphics[width=\linewidth]{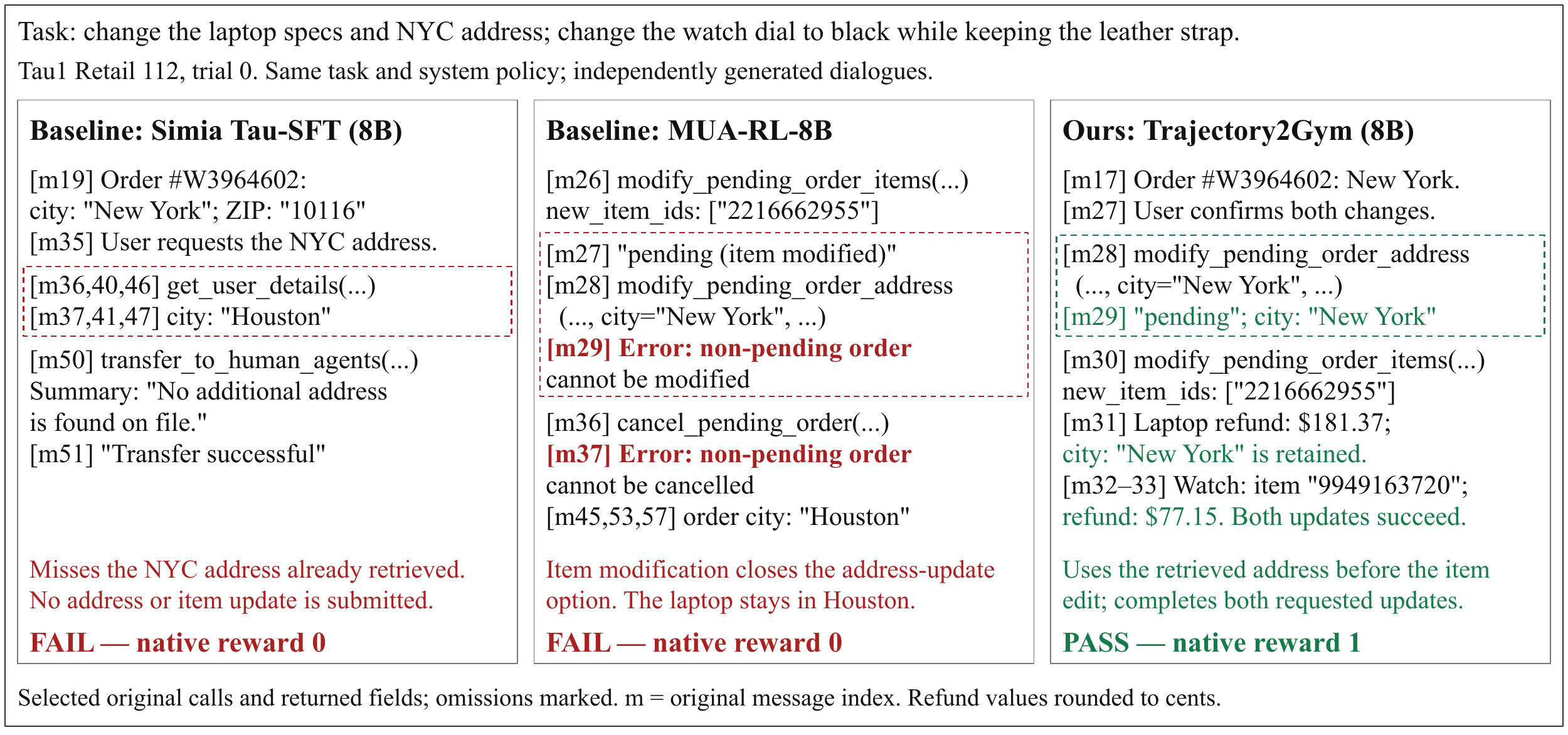}
  \caption{\textbf{Same-task comparison of evidence use and action order.}
  Simia overlooks visible evidence; MUA-RL violates a state-dependent ordering;
  \method{} reuses the evidence and preserves the required dependency
  (native rewards 0, 0, and 1).
  Interactions are independently generated and abridged for display;
  a step-by-step account appears in
  Appendix~\ref{app:baseline-case-comparison}.}
  \label{fig:trajectory2gym-baseline-comparison}
\end{figure}
\FloatBarrier

\section{Conclusion}
\label{sec:conclusion}

Successful trajectories are the default supervision for tool-use agents,
yet success certifies an episode, not what it teaches. This paper has
argued that a trajectory's value lies in properties an outcome filter
cannot see, compositional coverage and decision grounding, and \method{}
builds training data around them: it synthesizes the interactions a pool
lacks, certifies supervision one decision at a time, and allocates
structural support per scale. Across scales from 1.7B to 30B, standard
SFT on the constructed interactions improves tool use, especially on
multi-turn and stateful tasks. Matched 8B controls give directional
evidence for the four design choices, with the clearest isolated gain
from parent normalization. The broader lesson is that success should
gate supervision, not license it;
that authentic errors belong in context, not in targets; and that
support is an allocation, not a quantity to maximize.

The resulting training corpus can be viewed at three levels: which
interaction structures are available, which decisions are eligible for
imitation, and how much optimization mass each complete interaction
receives. These levels can vary even when two datasets contain similar
numbers of trajectories or target tokens. Treating them separately makes
data-construction choices explicit and helps interpret why larger
collections do not automatically provide better supervision. For future
tool-use training pipelines, this suggests tracking structural coverage
and decision eligibility alongside data volume, while keeping the
training weight of an interaction independent of its rendering format.

Extending decision-level certification to on-policy rollouts and to web, software,
and terminal agents is the natural next step. What a trajectory teaches
is decided not by its outcome, but by the dependencies it realizes and
the decisions its visible history can justify.

\clearpage
\bibliography{references}

@inproceedings{xi2025agentgymrl,
  title = {{AgentGym-RL}: An Open-Source Framework to Train {LLM} Agents for Long-Horizon Decision Making via Multi-Turn {RL}},
  author = {Zhiheng Xi and Jixuan Huang and Chenyang Liao and Baodai Huang and Jiaqi Liu and Honglin Guo and Yajie Yang and Rui Zheng and Junjie Ye and Jiazheng Zhang and Wenxiang Chen and Wei He and Yiwen Ding and Guanyu Li and Zehui Chen and Zhengyin Du and Xuesong Yao and Yufei Xu and Jiecao Chen and Tao Gui and Zuxuan Wu and Qi Zhang and Xuanjing Huang and Yu-Gang Jiang},
  booktitle = {The Fourteenth International Conference on Learning Representations},
  year = {2026}
}

@inproceedings{qian2025toolrl,
  title = {{ToolRL}: Reward is All Tool Learning Needs},
  author = {Qian, Cheng and Acikgoz, Emre Can and He, Qi and Wang, Hongru and Chen, Xiusi and Hakkani-T{\"u}r, Dilek and Tur, Gokhan and Ji, Heng},
  booktitle = {Advances in Neural Information Processing Systems 38},
  pages = {116896--116926},
  year = {2025},
  publisher = {Neural Information Processing Systems Foundation, Inc. (NeurIPS)},
  doi = {10.52202/085713-3524}
}

@article{tian2026astra,
  title = {{ASTRA}: Automated Synthesis of Agentic Trajectories and Reinforcement Arenas},
  author = {Xiaoyu Tian and Haotian Wang and Shuaiting Chen and Hao Zhou and Kaichi Yu and Yudian Zhang and Jade Ouyang and Junxi Yin and Jiong Chen and Baoyan Guo and Lei Zhang and Junjie Tao and Yuansheng Song and Ming Cui and Chengwei Liu},
  journal = {arXiv preprint arXiv:2601.21558},
  year = {2026}
}

@inproceedings{wang2026agentworldmodel,
  title = {Agent World Model: Infinity Synthetic Environments for Agentic Reinforcement Learning},
  author = {Zhaoyang Wang and Canwen Xu and Boyi Liu and Yite Wang and Siwei Han and Zhewei Yao and Huaxiu Yao and Yuxiong He},
  booktitle = {Forty-third International Conference on Machine Learning},
  year = {2026}
}

@inproceedings{wang2026trajectory2task,
  title        = {{Trajectory2Task}: Training Robust Tool-Calling Agents with Synthesized Yet Verifiable Data for Complex User Intents},
  author       = {Wang, Ziyi and Lu, Yuxuan and Zhang, Yimeng and Chen, Pei and Dong, Ziwei and Huang, Jing and Gesi, Jiri and Tang, Xianfeng and Luo, Chen and Liu, Qun and Sang, Yisi and Lu, Hanqing and Li, Manling and Lai, Jin and Wang, Dakuo},
  booktitle    = {Proceedings of the 64th Annual Meeting of the Association for Computational Linguistics (Volume 1: Long Papers)},
  pages        = {44021--44044},
  year         = {2026},
  publisher    = {Association for Computational Linguistics},
  doi          = {10.18653/v1/2026.acl-long.2037}
}

@inproceedings{mialon2024gaia,
  title = {{GAIA}: A Benchmark for General {AI} Assistants},
  author = {Gr{\'e}goire Mialon and Cl{\'e}mentine Fourrier and Thomas Wolf and Yann LeCun and Thomas Scialom},
  booktitle = {The Twelfth International Conference on Learning Representations},
  year = {2024}
}

@article{luo2025mcpuniverse,
  title        = {{MCP-Universe}: Benchmarking Large Language Models with Real-World Model Context Protocol Servers},
  author       = {Luo, Ziyang and Shen, Zhiqi and Yang, Wenzhuo and Zhao, Zirui and Jwalapuram, Prathyusha and Saha, Amrita and Sahoo, Doyen and Savarese, Silvio and Xiong, Caiming and Li, Junnan},
  journal      = {arXiv preprint arXiv:2508.14704},
  year         = {2025}
}

@article{mo2025livemcpbench,
  title = {{LiveMCPBench}: Can Agents Navigate an Ocean of {MCP} Tools?},
  author = {Guozhao Mo and Wenliang Zhong and Jiawei Chen and Qianhao Yuan and Xuanang Chen and Yaojie Lu and Hongyu Lin and Ben He and Xianpei Han and Le Sun},
  journal = {arXiv preprint arXiv:2508.01780},
  year = {2025}
}

@inproceedings{prabhakar2025apigenmt,
  title = {{APIGen-MT}: Agentic Pipeline for Multi-Turn Data Generation via Simulated Agent-Human Interplay},
  author = {Prabhakar, Akshara and Liu, Zuxin and Zhu, Ming and Zhang, Jianguo and Awalgaonkar, Tulika Manoj and Wang, Shiyu and Liu, Zhiwei and Chen, Haolin and Hoang, Thai and Niebles, Juan Carlos and Heinecke, Shelby and Yao, Weiran and Wang, Huan and Savarese, Silvio and Xiong, Caiming},
  booktitle = {Advances in Neural Information Processing Systems 38},
  pages = {72647--72669},
  year = {2025},
  publisher = {Neural Information Processing Systems Foundation, Inc. (NeurIPS)},
  doi = {10.52202/085713-2184}
}

@article{xu2025toucan,
  title        = {{TOUCAN}: Synthesizing 1.5M Tool-Agentic Data from Real-World {MCP} Environments},
  author       = {Xu, Zhangchen and Meza Soria, Adriana and Tan, Shawn and Roy, Anurag and Agrawal, Ashish Sunil and Poovendran, Radha and Panda, Rameswar},
  journal      = {arXiv preprint arXiv:2510.01179},
  year         = {2025}
}

@article{xu2026envfactory,
  title        = {{EnvFactory}: Scaling Tool-Use Agents via Executable Environments Synthesis and Robust {RL}},
  author       = {Xu, Minrui and Wang, Zilin and Deng, Mengyi and Li, Zhiwei and Yang, Zhicheng and Zhu, Xiao and Liu, Yinhong and Zhu, Boyu and Huang, Baiyu and Chen, Chao and Deng, Heyuan and Mi, Fei and Shang, Lifeng and Zeng, Xingshan and Guo, Zhijiang},
  journal      = {arXiv preprint arXiv:2605.18703},
  year         = {2026}
}

@inproceedings{zhou2023webarena,
  title = {{WebArena}: A Realistic Web Environment for Building Autonomous Agents},
  author = {Shuyan Zhou and Frank F. Xu and Hao Zhu and Xuhui Zhou and Robert Lo and Abishek Sridhar and Xianyi Cheng and Tianyue Ou and Yonatan Bisk and Daniel Fried and Uri Alon and Graham Neubig},
  booktitle = {The Twelfth International Conference on Learning Representations},
  year = {2024}
}

@inproceedings{trivedi2024appworld,
  title        = {{AppWorld}: A Controllable World of Apps and People for Benchmarking Interactive Coding Agents},
  author       = {Trivedi, Harsh and Khot, Tushar and Hartmann, Mareike and Manku, Ruskin and Dong, Vinty and Li, Edward and Gupta, Shashank and Sabharwal, Ashish and Balasubramanian, Niranjan},
  booktitle    = {Proceedings of the 62nd Annual Meeting of the Association for Computational Linguistics (Volume 1: Long Papers)},
  pages        = {16022--16076},
  year         = {2024},
  publisher    = {Association for Computational Linguistics},
  doi          = {10.18653/v1/2024.acl-long.850}
}

@inproceedings{yao2024taubench,
  title = {{$\tau$-bench}: A Benchmark for Tool-Agent-User Interaction in Real-World Domains},
  author = {Shunyu Yao and Noah Shinn and Pedram Razavi and Karthik R Narasimhan},
  booktitle = {The Thirteenth International Conference on Learning Representations},
  year = {2025}
}

@inproceedings{barres2025tau2bench,
  title = {{$\tau^2$-Bench}: Evaluating Conversational Agents in a Dual-Control Environment},
  author = {Victor Barres and Honghua Dong and Soham Ray and Xujie Si and Karthik R Narasimhan},
  booktitle = {Forty-third International Conference on Machine Learning},
  year = {2026}
}

@inproceedings{shi2026tauknowledge,
  title = {{$\tau$-Knowledge}: Evaluating Conversational Agents over Unstructured Knowledge},
  author = {Quan Shi and Alexandra Zytek and Pedram Razavi and Karthik R Narasimhan and Victor Barres},
  booktitle = {Forty-third International Conference on Machine Learning},
  year = {2026}
}

@article{zhang2026gecko,
  title        = {{Gecko}: A Simulation Environment with Stateful Feedback for Refining Agent Tool Calls},
  author       = {Zhang, Zeyu and Li, Guohao and Xing, Zhenchang and Apostolopoulos, Alexandros and Lee, Yu Lin and Zheng, Liang},
  journal      = {arXiv preprint arXiv:2602.19218},
  year         = {2026}
}

@article{hu2026seal,
  title        = {{SEAL}: Synergistic Co-Evolution of Agents and Learning Environments},
  author       = {Hu, Yihao and Wen, Zhihao and Liu, Xiujin and Wang, Pan and Zhang, Xin and Wu, Wei},
  journal      = {arXiv preprint arXiv:2605.24426},
  year         = {2026}
}

@article{wang2026diagnosis,
  title        = {Diagnosis Before Recovery: Turning Agent Failures into Selective Self-Correction},
  author       = {Wang, Pan and Hu, Yihao and Wang, Hang and Lv, Zirui and Zhang, Xin and Li, Jianshe and Yang, Jiang-Ming and Wu, Wei and Tong, Yongqi},
  journal      = {arXiv preprint arXiv:2608.11772},
  year         = {2026}
}

@article{tong2026grounded,
  title        = {Grounded Skill Synthesis from Code at Scale for Agentic Intelligence},
  author       = {Tong, Yongqi and Wang, Pan and Wang, Hang and Li, Jianshe and Zhang, Xin and Yang, Jiang-Ming and Wu, Wei},
  journal      = {arXiv preprint arXiv:2609.05571},
  year         = {2026}
}

@article{zhao2026towards,
  title        = {Towards Better Agents for Multi-Turn User Interaction: The Next User Turn Is More Than Context},
  author       = {Zhao, Yiwen and Wen, Zhihao and Mao, Yuchen and Jiang, Mingxuan and Hu, Yihao and Wang, Pan and Zhang, Xin and Wu, Wei},
  journal      = {arXiv preprint arXiv:2608.17499},
  year         = {2026}
}

@inproceedings{liu2025toolace,
  title = {{ToolACE}: Winning the Points of {LLM} Function Calling},
  author = {Weiwen Liu and Xu Huang and Xingshan Zeng and Xinlong Hao and Shuai Yu and Dexun Li and Shuai Wang and Weinan Gan and Zhengying Liu and Yuanqing Yu and Zezhong Wang and Yuxian Wang and Wu Ning and Yutai Hou and Bin Wang and Chuhan Wu and Xinzhi Wang and Yong Liu and Yasheng Wang and Duyu Tang and Dandan Tu and Lifeng Shang and Xin Jiang and Ruiming Tang and Defu Lian and Qun Liu and Enhong Chen},
  booktitle = {The Thirteenth International Conference on Learning Representations},
  year = {2025}
}

@inproceedings{guo2024stabletoolbench,
  title = {{StableToolBench}: Towards Stable Large-Scale Benchmarking on Tool Learning of Large Language Models},
  author = {Guo, Zhicheng and Cheng, Sijie and Wang, Hao and Liang, Shihao and Qin, Yujia and Li, Peng and Liu, Zhiyuan and Sun, Maosong and Liu, Yang},
  editor = {Ku, Lun-Wei and Martins, Andre and Srikumar, Vivek},
  booktitle = {Findings of the Association for Computational Linguistics: ACL 2024},
  pages = {11143--11156},
  year = {2024},
  address = {Bangkok, Thailand},
  publisher = {Association for Computational Linguistics},
  doi = {10.18653/v1/2024.findings-acl.664}
}

@inproceedings{xie2024osworld,
  title = {{OSWorld}: Benchmarking Multimodal Agents for Open-Ended Tasks in Real Computer Environments},
  author = {Xie, Tianbao and Zhang, Danyang and Chen, Jixuan and Li, Xiaochuan and Zhao, Siheng and Cao, Ruisheng and Hua, Toh Jing and Cheng, Zhoujun and Shin, Dongchan and Lei, Fangyu and Liu, Yitao and Xu, Yiheng and Zhou, Shuyan and Savarese, Silvio and Xiong, Caiming and Zhong, Victor and Yu, Tao},
  editor = {A. Globerson and L. Mackey and D. Belgrave and A. Fan and U. Paquet and J. Tomczak and C. Zhang},
  booktitle = {Advances in Neural Information Processing Systems},
  volume = {37},
  pages = {52040--52094},
  year = {2024},
  publisher = {Curran Associates, Inc.},
  doi = {10.52202/079017-1650}
}

@inproceedings{yao2023react,
  title = {{ReAct}: Synergizing Reasoning and Acting in Language Models},
  author = {Shunyu Yao and Jeffrey Zhao and Dian Yu and Nan Du and Izhak Shafran and Karthik R Narasimhan and Yuan Cao},
  booktitle = {The Eleventh International Conference on Learning Representations},
  year = {2023}
}

@inproceedings{qin2023toolllm,
  title = {{ToolLLM}: Facilitating Large Language Models to Master 16000+ Real-world {API}s},
  author = {Yujia Qin and Shihao Liang and Yining Ye and Kunlun Zhu and Lan Yan and Yaxi Lu and Yankai Lin and Xin Cong and Xiangru Tang and Bill Qian and Sihan Zhao and Lauren Hong and Runchu Tian and Ruobing Xie and Jie Zhou and Mark Gerstein and Dahai Li and Zhiyuan Liu and Maosong Sun},
  booktitle = {The Twelfth International Conference on Learning Representations},
  year = {2024}
}

@article{li2025simia,
  title        = {Simulating Environments with Reasoning Models for Agent Training},
  author       = {Li, Yuetai and Inan, Huseyin A. and Yue, Xiang and Chen, Wei-Ning and Wutschitz, Lukas and Kulkarni, Janardhan and Poovendran, Radha and Sim, Robert and Rajmohan, Saravan},
  journal      = {arXiv preprint arXiv:2511.01824},
  year         = {2025}
}

@inproceedings{chen2026dreamgym,
  title = {Scaling Agent Learning via Experience Synthesis},
  author = {Zhaorun Chen and Zhuokai Zhao and Kai Zhang and Bo Liu and Qi Qi and Yifan Wu and Tarun Kalluri and Xuefei Cao and Yuanhao Xiong and Haibo Tong and Huaxiu Yao and Hengduo Li and Jiacheng Zhu and Xian Li and Dawn Song and Bo Li and Jason E Weston and Dat Huynh},
  booktitle = {The Fourteenth International Conference on Learning Representations},
  year = {2026}
}

@inproceedings{lu2024toolsandbox,
  title = {{ToolSandbox}: A Stateful, Conversational, Interactive Evaluation Benchmark for {LLM} Tool Use Capabilities},
  author = {Lu, Jiarui and Holleis, Thomas and Zhang, Yizhe and Aumayer, Bernhard and Nan, Feng and Bai, Haoping and Ma, Shuang and Ma, Shen and Li, Mengyu and Yin, Guoli and Wang, Zirui and Pang, Ruoming},
  editor = {Chiruzzo, Luis and Ritter, Alan and Wang, Lu},
  booktitle = {Findings of the Association for Computational Linguistics: NAACL 2025},
  pages = {1160--1183},
  year = {2025},
  address = {Albuquerque, New Mexico},
  publisher = {Association for Computational Linguistics},
  doi = {10.18653/v1/2025.findings-naacl.65}
}

@inproceedings{li2023apibank,
  title = {{API-Bank}: A Comprehensive Benchmark for Tool-Augmented {LLMs}},
  author = {Li, Minghao and Zhao, Yingxiu and Yu, Bowen and Song, Feifan and Li, Hangyu and Yu, Haiyang and Li, Zhoujun and Huang, Fei and Li, Yongbin},
  editor = {Bouamor, Houda and Pino, Juan and Bali, Kalika},
  booktitle = {Proceedings of the 2023 Conference on Empirical Methods in Natural Language Processing},
  pages = {3102--3116},
  year = {2023},
  address = {Singapore},
  publisher = {Association for Computational Linguistics},
  doi = {10.18653/v1/2023.emnlp-main.187}
}

@inproceedings{song2025envscaler,
    title = "{E}nv{S}caler: Scaling Tool-Interactive Environments for {LLM} Agent via Programmatic Synthesis",
    author = "Song, Xiaoshuai  and
      Chang, Haofei  and
      Dong, Guanting  and
      Zhu, Yutao  and
      Wen, Ji-Rong  and
      Dou, Zhicheng",
    editor = "Liakata, Maria  and
      Moreira, Viviane P.  and
      Zhang, Jiajun  and
      Jurgens, David",
    booktitle = "Findings of the {A}ssociation for {C}omputational {L}inguistics: {ACL} 2026",
    year = "2026",
    address = "San Diego, California, United States",
    publisher = "Association for Computational Linguistics",
    doi = "10.18653/v1/2026.findings-acl.407",
    pages = "8326--8357",
    ISBN = "979-8-89176-395-1"
}

@inproceedings{tu2025scaleenv,
title={{ScaleEnv}: Scaling Environment Synthesis from Scratch for Generalist Interactive Tool-Use Agent Training},
author={Dunwei Tu and Hongyan Hao and Hansi Yang and Yihao Chen and Yu Yang and Yueqing Sun and Xingchen Liu and Furao Shen and Qi GU and Hui Su and Xunliang Cai},
booktitle={Forty-third International Conference on Machine Learning},
year={2026}
}

@inproceedings{fang2025agentscaler,
    title = "Towards General Agentic Intelligence via Environment Scaling",
    author = "Fang, Runnan  and
      Cai, Shihao  and
      Li, Baixuan  and
      Wu, Jialong  and
      Li, Guangyu  and
      Yin, Wenbiao  and
      Wang, Xinyu  and
      Wang, Xiaobin  and
      Su, Liangcai  and
      Zhang, Zhen  and
      Wu, Shibin  and
      Tao, Zhengwei  and
      Jiang, Yong  and
      Xie, Pengjun  and
      Zhang, Ningyu  and
      Huang, Fei  and
      Zhang, Wentao  and
      Zhou, Jingren",
    editor = "Liakata, Maria  and
      Moreira, Viviane P.  and
      Zhang, Jiajun  and
      Jurgens, David",
    booktitle = "Findings of the {A}ssociation for {C}omputational {L}inguistics: {ACL} 2026",
    year = "2026",
    address = "San Diego, California, United States",
    publisher = "Association for Computational Linguistics",
    doi = "10.18653/v1/2026.findings-acl.872",
    pages = "17610--17621",
    ISBN = "979-8-89176-395-1"
}

@article{putta2024agentq,
  title = {Agent Q: Advanced Reasoning and Learning for Autonomous {AI} Agents},
  author = {Pranav Putta and Edmund Mills and Naman Garg and Sumeet Motwani and Chelsea Finn and Divyansh Garg and Rafael Rafailov},
  journal = {arXiv preprint arXiv:2408.07199},
  year = {2024}
}

@inproceedings{ross2011dagger,
  title = 	 {A Reduction of Imitation Learning and Structured Prediction to No-Regret Online Learning},
  author = 	 {Ross, Stephane and Gordon, Geoffrey and Bagnell, Drew},
  booktitle = 	 {Proceedings of the Fourteenth International Conference on Artificial Intelligence and Statistics},
  pages = 	 {627--635},
  year = 	 {2011},
  editor = 	 {Gordon, Geoffrey and Dunson, David and Dudík, Miroslav},
  volume = 	 {15},
  series = 	 {Proceedings of Machine Learning Research},
  address = 	 {Fort Lauderdale, FL, USA},
  publisher =    {PMLR}
}

@inproceedings{schick2023toolformer,
 author = {Schick, Timo and Dwivedi-Yu, Jane and Dessi, Roberto and Raileanu, Roberta and Lomeli, Maria and Hambro, Eric and Zettlemoyer, Luke and Cancedda, Nicola and Scialom, Thomas},
 booktitle = {Advances in Neural Information Processing Systems},
 doi = {10.52202/075280-2997},
 editor = {A. Oh and T. Naumann and A. Globerson and K. Saenko and M. Hardt and S. Levine},
 pages = {68539--68551},
 publisher = {Curran Associates, Inc.},
 title = {Toolformer: Language Models Can Teach Themselves to Use Tools},
 volume = {36},
 year = {2023}
}

@inproceedings{patil2023gorilla,
  title = {Gorilla: Large Language Model Connected with Massive {API}s},
  author = {Patil, Shishir G. and Zhang, Tianjun and Wang, Xin and Gonzalez, Joseph E.},
  editor = {A. Globerson and L. Mackey and D. Belgrave and A. Fan and U. Paquet and J. Tomczak and C. Zhang},
  booktitle = {Advances in Neural Information Processing Systems},
  volume = {37},
  pages = {126544--126565},
  year = {2024},
  publisher = {Curran Associates, Inc.},
  doi = {10.52202/079017-4020}
}

@inproceedings{liu2023agentbench,
title={{AgentBench}: Evaluating {LLM}s as Agents},
author={Xiao Liu and Hao Yu and Hanchen Zhang and Yifan Xu and Xuanyu Lei and Hanyu Lai and Yu Gu and Hangliang Ding and Kaiwen Men and Kejuan Yang and Shudan Zhang and Xiang Deng and Aohan Zeng and Zhengxiao Du and Chenhui Zhang and Sheng Shen and Tianjun Zhang and Yu Su and Huan Sun and Minlie Huang and Yuxiao Dong and Jie Tang},
booktitle={The Twelfth International Conference on Learning Representations},
year={2024}
}

@inproceedings{zeng2023agenttuning,
    title = "{A}gent{T}uning: Enabling Generalized Agent Abilities for {LLM}s",
    author = "Zeng, Aohan  and
      Liu, Mingdao  and
      Lu, Rui  and
      Wang, Bowen  and
      Liu, Xiao  and
      Dong, Yuxiao  and
      Tang, Jie",
    editor = "Ku, Lun-Wei  and
      Martins, Andre  and
      Srikumar, Vivek",
    booktitle = "Findings of the Association for Computational Linguistics: ACL 2024",
    year = "2024",
    address = "Bangkok, Thailand",
    publisher = "Association for Computational Linguistics",
    doi = "10.18653/v1/2024.findings-acl.181",
    pages = "3053--3077"
}

@article{chen2023fireact,
  title = {{FireAct}: Toward Language Agent Fine-tuning},
  author = {Baian Chen and Chang Shu and Ehsan Shareghi and Nigel Collier and Karthik Narasimhan and Shunyu Yao},
  journal = {arXiv preprint arXiv:2310.05915},
  year = {2023}
}

@inproceedings{song2024eto,
    title = "Trial and Error: Exploration-Based Trajectory Optimization of {LLM} Agents",
    author = "Song, Yifan  and
      Yin, Da  and
      Yue, Xiang  and
      Huang, Jie  and
      Li, Sujian  and
      Lin, Bill Yuchen",
    editor = "Ku, Lun-Wei  and
      Martins, Andre  and
      Srikumar, Vivek",
    booktitle = "Proceedings of the 62nd Annual Meeting of the Association for Computational Linguistics (Volume 1: Long Papers)",
    year = "2024",
    address = "Bangkok, Thailand",
    publisher = "Association for Computational Linguistics",
    doi = "10.18653/v1/2024.acl-long.409",
    pages = "7584--7600"
}

@inproceedings{xi2024agentgym,
  title = {{AgentGym}: Evaluating and Training Large Language Model-based Agents across Diverse Environments},
  author = {Xi, Zhiheng and Ding, Yiwen and Chen, Wenxiang and Hong, Boyang and Guo, Honglin and Wang, Junzhe and Guo, Xin and Yang, Dingwen and Liao, Chenyang and He, Wei and Gao, Songyang and Chen, Lu and Zheng, Rui and Zou, Yicheng and Gui, Tao and Zhang, Qi and Qiu, Xipeng and Huang, Xuanjing and Wu, Zuxuan and Jiang, Yu-Gang},
  editor = {Che, Wanxiang and Nabende, Joyce and Shutova, Ekaterina and Pilehvar, Mohammad Taher},
  booktitle = {Proceedings of the 63rd Annual Meeting of the Association for Computational Linguistics (Volume 1: Long Papers)},
  pages = {27914--27961},
  year = {2025},
  address = {Vienna, Austria},
  publisher = {Association for Computational Linguistics},
  doi = {10.18653/v1/2025.acl-long.1355}
}

@article{yang2025toolmind,
  title = {{ToolMind} Technical Report: A Large-Scale, Reasoning-Enhanced Tool-Use Dataset},
  author = {Chen Yang and Ran Le and Yun Xing and Zhenwei An and Zongchao Chen and Wayne Xin Zhao and Yang Song and Tao Zhang},
  journal = {arXiv preprint arXiv:2511.15718},
  year = {2025}
}

@article{zhao2025muarl,
  title        = {{MUA-RL}: Multi-turn User-interacting Agent Reinforcement Learning for Agentic Tool Use},
  author       = {Zhao, Weikang and Wang, Xili and Ma, Chengdi and Kong, Lingbin and Yang, Zhaohua and Tuo, Mingxiang and Shi, Xiaowei and Zhai, Yitao and Cai, Xunliang},
  journal      = {arXiv preprint arXiv:2508.18669},
  year         = {2025}
}

@inproceedings{zhang2026fissiongrpo,
  title = {Robust Tool Use via {Fission-GRPO}: Learning to Recover from Execution Errors},
  author = {Zhang, Zhiwei and Zhao, Fei and Wang, Rui and Wang, Zezhong and Liang, Bin and Wang, Jiakang and Hu, Yao and Cao, Shaosheng and Wong, Kam-Fai},
  editor = {Liakata, Maria and Moreira, Viviane P. and Zhang, Jiajun and Jurgens, David},
  booktitle = {Proceedings of the 64th Annual Meeting of the {A}ssociation for {C}omputational {L}inguistics (Volume 1: Long Papers)},
  pages = {40477--40491},
  year = {2026},
  address = {San Diego, California, United States},
  publisher = {Association for Computational Linguistics},
  doi = {10.18653/v1/2026.acl-long.1880}
}

@article{zhang2026visualseeker,
  title        = {Visual-Seeker: Towards Visual-Native Multimodal Agentic Search via Active Visual Reasoning},
  author       = {Zhang, Zhengbo and Miao, Changtao and Su, Jinbo and Zhou, Zhaowen and Zhang, Chunxia and Wang, Xukai and Liu, Ruiqi and Zheng, Kaiyuan and Cai, Jiansheng and Zhang, Bo and Li, Zhe and Xiang, Shiming and Yan, Ying},
  journal      = {arXiv preprint arXiv:2606.15231},
  year         = {2026}
}

@article{yang2026topocurate,
  title        = {TopoCurate: Modeling Interaction Topology for Tool-Use Agent Training},
  author       = {Yang, Jinluan and Liu, Yuxin and Chen, Zhengyu and Han, Chengcheng and Sun, Yueqing and Gu, Qi and Su, Hui and Cai, Xunliang and Wu, Fei and Kuang, Kun},
  journal      = {arXiv preprint arXiv:2603.01714},
  year         = {2026}
}

@article{parisi2022talm,
  title        = {{TALM}: Tool Augmented Language Models},
  author       = {Parisi, Aaron and Zhao, Yao and Fiedel, Noah},
  journal      = {arXiv preprint arXiv:2205.12255},
  year         = {2022}
}

@article{tang2023toolalpaca,
  title        = {{ToolAlpaca}: Generalized Tool Learning for Language Models with 3000 Simulated Cases},
  author       = {Tang, Qiaoyu and Deng, Ziliang and Lin, Hongyu and Han, Xianpei and Liang, Qiao and Cao, Boxi and Sun, Le},
  journal      = {arXiv preprint arXiv:2306.05301},
  year         = {2023}
}

@inproceedings{liu2024apigen,
  title = {{APIGen}: Automated Pipeline for Generating Verifiable and Diverse Function-Calling Datasets},
  author = {Liu, Zuxin and Hoang, Thai and Zhang, Jianguo and Zhu, Ming and Lan, Tian and Kokane, Shirley and Tan, Juntao and Yao, Weiran and Liu, Zhiwei and Feng, Yihao and Murthy, Rithesh and Yang, Liangwei and Savarese, Silvio and Niebles, Juan Carlos and Wang, Huan and Heinecke, Shelby and Xiong, Caiming},
  editor = {A. Globerson and L. Mackey and D. Belgrave and A. Fan and U. Paquet and J. Tomczak and C. Zhang},
  booktitle = {Advances in Neural Information Processing Systems},
  volume = {37},
  pages = {54463--54482},
  year = {2024},
  publisher = {Curran Associates, Inc.},
  doi = {10.52202/079017-1725}
}

@inproceedings{patil2025bfcl,
  title = {The Berkeley Function Calling Leaderboard ({BFCL}): From Tool Use to Agentic Evaluation of Large Language Models},
  author = {Patil, Shishir G and Mao, Huanzhi and Yan, Fanjia and Ji, Charlie Cheng-Jie and Suresh, Vishnu and Stoica, Ion and Gonzalez, Joseph E.},
  editor = {Singh, Aarti and Fazel, Maryam and Hsu, Daniel and Lacoste-Julien, Simon and Berkenkamp, Felix and Maharaj, Tegan and Wagstaff, Kiri and Zhu, Jerry},
  booktitle = {Proceedings of the 42nd International Conference on Machine Learning},
  volume = {267},
  pages = {48371--48392},
  year = {2025},
  publisher = {PMLR}
}

@inproceedings{zhuang2023toolqadatasetllmquestion,
 author = {Zhuang, Yuchen and Yu, Yue and Wang, Kuan and Sun, Haotian and Zhang, Chao},
 booktitle = {Advances in Neural Information Processing Systems},
 doi = {10.52202/075280-2180},
 editor = {A. Oh and T. Naumann and A. Globerson and K. Saenko and M. Hardt and S. Levine},
 pages = {50117--50143},
 publisher = {Curran Associates, Inc.},
 title = {{ToolQA}: A Dataset for {LLM} Question Answering with External Tools},
 volume = {36},
 year = {2023}
}

@inproceedings{huang2024metatoolbenchmarklargelanguage,
title={{MetaTool} Benchmark for Large Language Models: Deciding Whether to Use Tools and Which to Use},
author={Yue Huang and Jiawen Shi and Yuan Li and Chenrui Fan and Siyuan Wu and Qihui Zhang and Yixin Liu and Pan Zhou and Yao Wan and Neil Zhenqiang Gong and Lichao Sun},
booktitle={The Twelfth International Conference on Learning Representations},
year={2024}
}

@inproceedings{chen-etal-2024-eval,
    title = "{T-Eval}: Evaluating the Tool Utilization Capability of Large Language Models Step by Step",
    author = "Chen, Zehui  and
      Du, Weihua  and
      Zhang, Wenwei  and
      Liu, Kuikun  and
      Liu, Jiangning  and
      Zheng, Miao  and
      Zhuo, Jingming  and
      Zhang, Songyang  and
      Lin, Dahua  and
      Chen, Kai  and
      Zhao, Feng",
    editor = "Ku, Lun-Wei  and
      Martins, Andre  and
      Srikumar, Vivek",
    booktitle = "Proceedings of the 62nd Annual Meeting of the Association for Computational Linguistics (Volume 1: Long Papers)",
    month = aug,
    year = "2024",
    address = "Bangkok, Thailand",
    publisher = "Association for Computational Linguistics",
    doi = "10.18653/v1/2024.acl-long.515",
    pages = "9510--9529"
}

@inproceedings{basu-etal-2025-nestful,
    title = "{NESTFUL}: A Benchmark for Evaluating {LLM}s on Nested Sequences of {API} Calls",
    author = "Basu, Kinjal  and
      Abdelaziz, Ibrahim  and
      Kate, Kiran  and
      Agarwal, Mayank  and
      Crouse, Maxwell  and
      Rizk, Yara  and
      Bradford, Kelsey  and
      Munawar, Asim  and
      Kumaravel, Sadhana  and
      Goyal, Saurabh  and
      Wang, Xin  and
      Lastras, Luis A.  and
      Kapanipathi, Pavan",
    editor = "Christodoulopoulos, Christos  and
      Chakraborty, Tanmoy  and
      Rose, Carolyn  and
      Peng, Violet",
    booktitle = "Proceedings of the 2025 Conference on Empirical Methods in Natural Language Processing",
    month = nov,
    year = "2025",
    address = "Suzhou, China",
    publisher = "Association for Computational Linguistics",
    doi = "10.18653/v1/2025.emnlp-main.1702",
    pages = "33538--33547",
    ISBN = "979-8-89176-332-6"
}

@article{zhong2025complexfuncbenchexploringmultistepconstrained,
  title = {{ComplexFuncBench}: Exploring Multi-Step and Constrained Function Calling under Long-Context Scenario},
  author = {Lucen Zhong and Zhengxiao Du and Xiaohan Zhang and Haiyi Hu and Jie Tang},
  journal = {arXiv preprint arXiv:2501.10132},
  year = {2025}
}

@inproceedings{ye-etal-2025-toolhop,
    title = "{T}ool{H}op: A Query-Driven Benchmark for Evaluating Large Language Models in Multi-Hop Tool Use",
    author = "Ye, Junjie  and
      Du, Zhengyin  and
      Yao, Xuesong  and
      Lin, Weijian  and
      Xu, Yufei  and
      Chen, Zehui  and
      Wang, Zaiyuan  and
      Zhu, Sining  and
      Xi, Zhiheng  and
      Yuan, Siyu  and
      Gui, Tao  and
      Zhang, Qi  and
      Huang, Xuanjing  and
      Chen, Jiecao",
    editor = "Che, Wanxiang  and
      Nabende, Joyce  and
      Shutova, Ekaterina  and
      Pilehvar, Mohammad Taher",
    booktitle = "Proceedings of the 63rd Annual Meeting of the Association for Computational Linguistics (Volume 1: Long Papers)",
    month = jul,
    year = "2025",
    address = "Vienna, Austria",
    publisher = "Association for Computational Linguistics",
    doi = "10.18653/v1/2025.acl-long.150",
    pages = "2995--3021",
    ISBN = "979-8-89176-251-0"
}

@article{chen2025acebenchwinsmatchpoint,
  title = {{ACEBench}: Who Wins the Match Point in Tool Usage?},
  author = {Chen Chen and Xinlong Hao and Weiwen Liu and Xu Huang and Xingshan Zeng and Shuai Yu and Dexun Li and Shuai Wang and Weinan Gan and Yuefeng Huang and Wulong Liu and Xinzhi Wang and Defu Lian and Baoqun Yin and Yasheng Wang and Wu Liu},
  journal = {arXiv preprint arXiv:2501.12851},
  year = {2025}
}

@inproceedings{wang2024mintevaluatingllmsmultiturn,
title={{MINT}: Evaluating {LLM}s in Multi-turn Interaction with Tools and Language Feedback},
author={Xingyao Wang and Zihan Wang and Jiateng Liu and Yangyi Chen and Lifan Yuan and Hao Peng and Heng Ji},
booktitle={The Twelfth International Conference on Learning Representations},
year={2024}
}

@inproceedings{xu2025theagentcompanybenchmarkingllmagents,
  title = {{TheAgentCompany}: Benchmarking {LLM} Agents on Consequential Real World Tasks},
  author = {Frank F. Xu and Yufan Song and Boxuan Li and Yuxuan Tang and Kritanjali Jain and Mengxue Bao and Zora Zhiruo Wang and Xuhui Zhou and Zhitong Guo and Murong Cao and Mingyang Yang and Hao Yang Lu and Amaad Martin and Zhe Su and Leander Melroy Maben and Raj Mehta and Wayne Chi and Lawrence Keunho Jang and Yiqing Xie and Shuyan Zhou and Graham Neubig},
  booktitle = {Advances in Neural Information Processing Systems 38},
  pages = {10291--10321},
  year = {2025},
  publisher = {Neural Information Processing Systems Foundation, Inc. (NeurIPS)},
  doi = {10.52202/085713-0315}
}

@inproceedings{li2026tooldecathlonbenchmarkinglanguage,
title={The Tool Decathlon: Benchmarking Language Agents for Diverse, Realistic, and Long-Horizon Task Execution},
author={Junlong Li and Wenshuo Zhao and Jian Zhao and Weihao Zeng and Haoze Wu and Xiaochen Wang and Rui Ge and Yuxuan Cao and Yuzhen Huang and Wei Liu and Junteng Liu and Zhaochen Su and Yiyang Guo and Fan Zhou and Lueyang Zhang and Juan Michelini and Xingyao Wang and Xiang Yue and Shuyan Zhou and Graham Neubig and Junxian He},
booktitle={The Fourteenth International Conference on Learning Representations},
year={2026}
}

@inproceedings{wang2025mcpbenchbenchmarkingtoolusingllm,
title={{MCP}-Bench: Benchmarking Tool-Using {LLM} Agents with Complex Real-World Tasks via {MCP} Servers},
author={Zhenting Wang and Qi Chang and Hemani Patel and Shashank Biju and Cheng-En Wu and Quan Liu and Aolin Ding and Alireza Rezazadeh and Ankit Shah and Yujia Bao and Eugene Siow},
booktitle={The Fourteenth International Conference on Learning Representations},
year={2026}
}

@inproceedings{wu2025mcpmarkbenchmarkstresstestingrealistic,
title={{MCPM}ark: A Benchmark for Stress-Testing Realistic and Comprehensive {MCP} Use},
author={Zijian Wu and Xiangyan Liu and xinyuan zhang and Lingjun Chen and Fanqing Meng and Lingxiao Du and Yiran Zhao and Fanshi Zhang and Yaoqi Ye and Jiawei Wang and Zirui Wang and Jinjie Ni and Yufan Yang and Arvin Xu and Michael Qizhe Shieh},
booktitle={The Fourteenth International Conference on Learning Representations},
year={2026}
}

@inproceedings{yang2023gpt4toolsteachinglargelanguage,
  title = {{GPT}4Tools: Teaching Large Language Model to Use Tools via Self-instruction},
  author = {Yang, Rui and Song, Lin and Li, Yanwei and Zhao, Sijie and Ge, Yixiao and Li, Xiu and Shan, Ying},
  editor = {A. Oh and T. Naumann and A. Globerson and K. Saenko and M. Hardt and S. Levine},
  booktitle = {Advances in Neural Information Processing Systems},
  volume = {36},
  pages = {71995--72007},
  year = {2023},
  publisher = {Curran Associates, Inc.},
  doi = {10.52202/075280-3149}
}

@inproceedings{yin-etal-2024-agent,
    title = "Agent {Lumos}: Unified and Modular Training for Open-Source Language Agents",
    author = "Yin, Da  and
      Brahman, Faeze  and
      Ravichander, Abhilasha  and
      Chandu, Khyathi  and
      Chang, Kai-Wei  and
      Choi, Yejin  and
      Lin, Bill Yuchen",
    editor = "Ku, Lun-Wei  and
      Martins, Andre  and
      Srikumar, Vivek",
    booktitle = "Proceedings of the 62nd Annual Meeting of the Association for Computational Linguistics (Volume 1: Long Papers)",
    month = aug,
    year = "2024",
    address = "Bangkok, Thailand",
    publisher = "Association for Computational Linguistics",
    doi = "10.18653/v1/2024.acl-long.670",
    pages = "12380--12403"
}

@inproceedings{chen-etal-2024-agent,
    title = "Agent-{FLAN}: Designing Data and Methods of Effective Agent Tuning for Large Language Models",
    author = "Chen, Zehui  and
      Liu, Kuikun  and
      Wang, Qiuchen  and
      Zhang, Wenwei  and
      Liu, Jiangning  and
      Lin, Dahua  and
      Chen, Kai  and
      Zhao, Feng",
    editor = "Ku, Lun-Wei  and
      Martins, Andre  and
      Srikumar, Vivek",
    booktitle = "Findings of the Association for Computational Linguistics: ACL 2024",
    month = aug,
    year = "2024",
    address = "Bangkok, Thailand",
    publisher = "Association for Computational Linguistics",
    doi = "10.18653/v1/2024.findings-acl.557",
    pages = "9354--9366"
}

@inproceedings{zhang-etal-2025-xlam,
    title = "x{LAM}: A Family of Large Action Models to Empower {AI} Agent Systems",
    author = "Zhang, Jianguo  and
      Lan, Tian  and
      Zhu, Ming  and
      Liu, Zuxin  and
      Hoang, Thai  and
      Kokane, Shirley  and
      Yao, Weiran  and
      Tan, Juntao  and
      Liu, Zhiwei  and
      Feng, Yihao  and
      Niebles, Juan Carlos  and
      Heinecke, Shelby  and
      Wang, Huan  and
      Savarese, Silvio  and
      Xiong, Caiming",
    editor = "Chiruzzo, Luis  and
      Ritter, Alan  and
      Wang, Lu",
    booktitle = "Proceedings of the 2025 Conference of the Nations of the Americas Chapter of the Association for Computational Linguistics: Human Language Technologies (Volume 1: Long Papers)",
    month = apr,
    year = "2025",
    address = "Albuquerque, New Mexico",
    publisher = "Association for Computational Linguistics",
    doi = "10.18653/v1/2025.naacl-long.578",
    pages = "11583--11597",
    ISBN = "979-8-89176-189-6"
}

@article{lin2024hammerrobustfunctioncallingondevice,
  title = {Hammer: Robust Function-Calling for On-Device Language Models via Function Masking},
  author = {Qiqiang Lin and Muning Wen and Qiuying Peng and Guanyu Nie and Junwei Liao and Jun Wang and Xiaoyun Mo and Jiamu Zhou and Cheng Cheng and Yin Zhao and Jun Wang and Weinan Zhang},
  journal = {arXiv preprint arXiv:2410.04587},
  year = {2024}
}

@inproceedings{wang2025toolgenunifiedtoolretrieval,
title={{ToolGen}: Unified Tool Retrieval and Calling via Generation},
author={Renxi Wang and Xudong Han and Lei Ji and Shu Wang and Timothy Baldwin and Haonan Li},
booktitle={The Thirteenth International Conference on Learning Representations},
year={2025}
}

@inproceedings{song-etal-2024-agentbank,
    title = "{A}gent{B}ank: Towards Generalized {LLM} Agents via Fine-Tuning on 50000+ Interaction Trajectories",
    author = "Song, Yifan  and
      Xiong, Weimin  and
      Zhao, Xiutian  and
      Zhu, Dawei  and
      Wu, Wenhao  and
      Wang, Ke  and
      Li, Cheng  and
      Peng, Wei  and
      Li, Sujian",
    editor = "Al-Onaizan, Yaser  and
      Bansal, Mohit  and
      Chen, Yun-Nung",
    booktitle = "Findings of the Association for Computational Linguistics: EMNLP 2024",
    month = nov,
    year = "2024",
    address = "Miami, Florida, USA",
    publisher = "Association for Computational Linguistics",
    doi = "10.18653/v1/2024.findings-emnlp.116",
    pages = "2124--2141"
}

@article{wang2024learningfailureintegratingnegative,
  title = {Learning From Failure: Integrating Negative Examples when Fine-tuning Large Language Models as Agents},
  author = {Renxi Wang and Haonan Li and Xudong Han and Yixuan Zhang and Timothy Baldwin},
  journal = {arXiv preprint arXiv:2402.11651},
  year = {2024}
}

@inproceedings{xiong-etal-2024-watch,
    title = "Watch Every Step! {LLM} Agent Learning via Iterative Step-level Process Refinement",
    author = "Xiong, Weimin  and
      Song, Yifan  and
      Zhao, Xiutian  and
      Wu, Wenhao  and
      Wang, Xun  and
      Wang, Ke  and
      Li, Cheng  and
      Peng, Wei  and
      Li, Sujian",
    editor = "Al-Onaizan, Yaser  and
      Bansal, Mohit  and
      Chen, Yun-Nung",
    booktitle = "Proceedings of the 2024 Conference on Empirical Methods in Natural Language Processing",
    month = nov,
    year = "2024",
    address = "Miami, Florida, USA",
    publisher = "Association for Computational Linguistics",
    doi = "10.18653/v1/2024.emnlp-main.93",
    pages = "1556--1572"
}

@inproceedings{feng2025retoolreinforcementlearningstrategic,
title={{ReTool}: Reinforcement Learning for Strategic Tool Use in {LLM}s},
author={Jiazhan Feng and Shijue Huang and Xingwei Qu and Ge Zhang and Yujia Qin and Baoquan Zhong and Chengquan Jiang and Jinxin Chi and Wanjun Zhong},
booktitle={The Fourteenth International Conference on Learning Representations},
year={2026}
}

@article{li2025torlscalingtoolintegratedrl,
  title = {{ToRL}: Scaling Tool-Integrated {RL}},
  author = {Xuefeng Li and Haoyang Zou and Pengfei Liu},
  journal = {arXiv preprint arXiv:2503.23383},
  year = {2025}
}

@inproceedings{zhang2025nemotronresearchtooln1exploringtoolusinglanguage,
title={{Nemotron-Research-Tool-N1}: Exploring Tool-Using Language Models with Reinforced Reasoning},
author={Shaokun Zhang and Yi Dong and Jieyu Zhang and Jan Kautz and Bryan Catanzaro and Andrew Tao and Qingyun Wu and Zhiding Yu and Guilin Liu},
booktitle={The Fourteenth International Conference on Learning Representations},
year={2026}
}

@article{kimiteam2026kimik2openagentic,
  title = {Kimi {K2}: Open Agentic Intelligence},
  author = {Kimi Team and others},
  journal = {arXiv preprint arXiv:2507.20534},
  year = {2025}
}

@inproceedings{chen2025facilitatingmultiturnfunctioncalling,
title={Facilitating Multi-turn Function Calling for {LLM}s via Compositional Instruction Tuning},
author={Mingyang Chen and sunhaoze and Tianpeng Li and Fan Yang and Hao Liang and KeerLu and Bin CUI and Wentao Zhang and Zenan Zhou and weipeng chen},
booktitle={The Thirteenth International Conference on Learning Representations},
year={2025}
}

@inproceedings{yin-etal-2025-magnet,
    title = "Magnet: Multi-turn Tool-use Data Synthesis and Distillation via Graph Translation",
    author = "Yin, Fan  and
      Wang, Zifeng  and
      Hsu, I-Hung  and
      Yan, Jun  and
      Jiang, Ke  and
      Chen, Yanfei  and
      Gu, Jindong  and
      Le, Long  and
      Chang, Kai-Wei  and
      Lee, Chen-Yu  and
      Palangi, Hamid  and
      Pfister, Tomas",
    editor = "Che, Wanxiang  and
      Nabende, Joyce  and
      Shutova, Ekaterina  and
      Pilehvar, Mohammad Taher",
    booktitle = "Proceedings of the 63rd Annual Meeting of the Association for Computational Linguistics (Volume 1: Long Papers)",
    month = jul,
    year = "2025",
    address = "Vienna, Austria",
    publisher = "Association for Computational Linguistics",
    doi = "10.18653/v1/2025.acl-long.1566",
    pages = "32600--32616",
    ISBN = "979-8-89176-251-0"
}

@inproceedings{wang-etal-2025-toolflow,
    title = "{T}ool{F}low: Boosting {LLM} Tool-Calling Through Natural and Coherent Dialogue Synthesis",
    author = "Wang, Zezhong  and
      Zeng, Xingshan  and
      Liu, Weiwen  and
      Li, Liangyou  and
      Wang, Yasheng  and
      Shang, Lifeng  and
      Jiang, Xin  and
      Liu, Qun  and
      Wong, Kam-Fai",
    editor = "Chiruzzo, Luis  and
      Ritter, Alan  and
      Wang, Lu",
    booktitle = "Proceedings of the 2025 Conference of the Nations of the Americas Chapter of the Association for Computational Linguistics: Human Language Technologies (Volume 1: Long Papers)",
    month = apr,
    year = "2025",
    address = "Albuquerque, New Mexico",
    publisher = "Association for Computational Linguistics",
    doi = "10.18653/v1/2025.naacl-long.214",
    pages = "4246--4263",
    ISBN = "979-8-89176-189-6"
}

@inproceedings{xu2025agenttrekagenttrajectorysynthesis,
title={{AgentTrek}: Agent Trajectory Synthesis via Guiding Replay with Web Tutorials},
author={Yiheng Xu and Dunjie Lu and Zhennan Shen and Junli Wang and Zekun Wang and Yuchen Mao and Caiming Xiong and Tao Yu},
booktitle={The Thirteenth International Conference on Learning Representations},
year={2025}
}

@inproceedings{ou2024synatraturningindirectknowledge,
 author = {Ou, Tianyue and Xu, Frank F. and Madaan, Aman and Liu, Jiarui and Lo, Robert and Sridhar, Abishek and Sengupta, Sudipta and Roth, Dan and Neubig, Graham and Zhou, Shuyan},
 booktitle = {Advances in Neural Information Processing Systems},
 doi = {10.52202/079017-2908},
 editor = {A. Globerson and L. Mackey and D. Belgrave and A. Fan and U. Paquet and J. Tomczak and C. Zhang},
 pages = {91618--91652},
 publisher = {Curran Associates, Inc.},
 title = {Synatra: Turning Indirect Knowledge into Direct Demonstrations for Digital Agents at Scale},
 volume = {37},
 year = {2024}
}

@inproceedings{sun-etal-2025-os,
    title = "{OS-Genesis}: Automating {GUI} Agent Trajectory Construction via Reverse Task Synthesis",
    author = "Sun, Qiushi  and
      Cheng, Kanzhi  and
      Ding, Zichen  and
      Jin, Chuanyang  and
      Wang, Yian  and
      Xu, Fangzhi  and
      Wu, Zhenyu  and
      Jia, Chengyou  and
      Chen, Liheng  and
      Liu, Zhoumianze  and
      Kao, Ben  and
      Li, Guohao  and
      He, Junxian  and
      Qiao, Yu  and
      Wu, Zhiyong",
    editor = "Che, Wanxiang  and
      Nabende, Joyce  and
      Shutova, Ekaterina  and
      Pilehvar, Mohammad Taher",
    booktitle = "Proceedings of the 63rd Annual Meeting of the Association for Computational Linguistics (Volume 1: Long Papers)",
    month = jul,
    year = "2025",
    address = "Vienna, Austria",
    publisher = "Association for Computational Linguistics",
    doi = "10.18653/v1/2025.acl-long.277",
    pages = "5555--5579",
    ISBN = "979-8-89176-251-0"
}

@inproceedings{pahuja-etal-2025-explorer,
    title = "Explorer: Scaling Exploration-driven Web Trajectory Synthesis for Multimodal Web Agents",
    author = "Pahuja, Vardaan  and
      Lu, Yadong  and
      Rosset, Corby  and
      Gou, Boyu  and
      Mitra, Arindam  and
      Whitehead, Spencer  and
      Su, Yu  and
      Awadallah, Ahmed Hassan",
    editor = "Che, Wanxiang  and
      Nabende, Joyce  and
      Shutova, Ekaterina  and
      Pilehvar, Mohammad Taher",
    booktitle = "Findings of the Association for Computational Linguistics: ACL 2025",
    month = jul,
    year = "2025",
    address = "Vienna, Austria",
    publisher = "Association for Computational Linguistics",
    doi = "10.18653/v1/2025.findings-acl.326",
    pages = "6300--6323",
    ISBN = "979-8-89176-256-5"
}

@inproceedings{hu2025agentgenenhancingplanningabilities,
  title={{AgentGen}: Enhancing Planning Abilities for Large Language Model based Agent via Environment and Task Generation},
  DOI={10.1145/3690624.3709321},
  booktitle={Proceedings of the 31st ACM SIGKDD Conference on Knowledge Discovery and Data Mining V.1},
  publisher={ACM},
  author={Hu, Mengkang and Zhao, Pu and Xu, Can and Sun, Qingfeng and Lou, Jian-Guang and Lin, Qingwei and Luo, Ping and Rajmohan, Saravan},
  year={2025},
  pages={496--507}
}

@inproceedings{song2026agentdataprotocolunifying,
title={Agent Data Protocol: Unifying Datasets for Diverse, Effective Fine-tuning of {LLM} Agents},
author={Yueqi Song and Ketan Ramaneti and Zaid Sheikh and Ziru Chen and Boyu Gou and Tianbao Xie and Yiheng Xu and Danyang Zhang and Apurva Gandhi and Fan Yang and Joseph Liu and Tianyue Ou and Zhihao Yuan and Frank F. Xu and Shuyan Zhou and Xingyao Wang and Xiang Yue and Tao Yu and Huan Sun and Yu Su and Graham Neubig},
booktitle={The Fourteenth International Conference on Learning Representations},
year={2026}
}

@inproceedings{zhou2023limaalignment,
  title = {{LIMA}: Less Is More for Alignment},
  author = {Zhou, Chunting and Liu, Pengfei and Xu, Puxin and Iyer, Srinivasan and Sun, Jiao and Mao, Yuning and Ma, Xuezhe and Efrat, Avia and Yu, Ping and Yu, Lili and Zhang, Susan and Ghosh, Gargi and Lewis, Mike and Zettlemoyer, Luke and Levy, Omer},
  editor = {A. Oh and T. Naumann and A. Globerson and K. Saenko and M. Hardt and S. Levine},
  booktitle = {Advances in Neural Information Processing Systems},
  volume = {36},
  pages = {55006--55021},
  year = {2023},
  publisher = {Curran Associates, Inc.},
  doi = {10.52202/075280-2400}
}

@inproceedings{liu2024makesgooddataalignment,
title={What Makes Good Data for Alignment? A Comprehensive Study of Automatic Data Selection in Instruction Tuning},
author={Wei Liu and Weihao Zeng and Keqing He and Yong Jiang and Junxian He},
booktitle={The Twelfth International Conference on Learning Representations},
year={2024}
}

@inproceedings{chen2019learningcheating,
  title = 	 {Learning by Cheating},
  author =       {Chen, Dian and Zhou, Brady and Koltun, Vladlen and Kr\"ahenb\"uhl, Philipp},
  booktitle = 	 {Proceedings of the Conference on Robot Learning},
  pages = 	 {66--75},
  year = 	 {2020},
  editor = 	 {Kaelbling, Leslie Pack and Kragic, Danica and Sugiura, Komei},
  volume = 	 {100},
  series = 	 {Proceedings of Machine Learning Research},
  publisher =    {PMLR}
}

@article{qin2024toollearningfoundationmodels,
  title={Tool Learning with Foundation Models},
  volume={57},
  DOI={10.1145/3704435},
  number={4},
  journal={ACM Computing Surveys},
  publisher={Association for Computing Machinery (ACM)},
  author={Qin, Yujia and Hu, Shengding and Lin, Yankai and Chen, Weize and Ding, Ning and Cui, Ganqu and Zeng, Zheni and Zhou, Xuanhe and Huang, Yufei and Xiao, Chaojun and Han, Chi and Fung, Yi Ren and Su, Yusheng and Wang, Huadong and Qian, Cheng and Tian, Runchu and Zhu, Kunlun and Liang, Shihao and Shen, Xingyu and Xu, Bokai and Zhang, Zhen and Ye, Yining and Li, Bowen and Tang, Ziwei and Yi, Jing and Zhu, Yuzhang and Dai, Zhenning and Yan, Lan and Cong, Xin and Lu, Yaxi and Zhao, Weilin and Huang, Yuxiang and Yan, Junxi and Han, Xu and Sun, Xian and Li, Dahai and Phang, Jason and Yang, Cheng and Wu, Tongshuang and Ji, Heng and Li, Guoliang and Liu, Zhiyuan and Sun, Maosong},
  year={2024},
  pages={1--40}
}

@article{qu2024toollearninglargelanguage,
  title={Tool learning with large language models: a survey},
  volume={19},
  DOI={10.1007/s11704-024-40678-2},
  number={8},
  journal={Frontiers of Computer Science},
  publisher={Springer Science and Business Media LLC},
  author={Qu, Changle and Dai, Sunhao and Wei, Xiaochi and Cai, Hengyi and Wang, Shuaiqiang and Yin, Dawei and Xu, Jun and Wen, Ji-rong},
  year={2025},
  month=Jan
}

@inproceedings{lu2023chameleonplugandplaycompositionalreasoning,
 author = {Lu, Pan and Peng, Baolin and Cheng, Hao and Galley, Michel and Chang, Kai-Wei and Wu, Ying Nian and Zhu, Song-Chun and Gao, Jianfeng},
 booktitle = {Advances in Neural Information Processing Systems},
 doi = {10.52202/075280-1882},
 editor = {A. Oh and T. Naumann and A. Globerson and K. Saenko and M. Hardt and S. Levine},
 pages = {43447--43478},
 publisher = {Curran Associates, Inc.},
 title = {Chameleon: Plug-and-Play Compositional Reasoning with Large Language Models},
 volume = {36},
 year = {2023}
}

@inproceedings{shen2024taskbenchbenchmarkinglargelanguage,
 author = {Shen, Yongliang and Song, Kaitao and Tan, Xu and Zhang, Wenqi and Ren, Kan and Yuan, Siyu and Lu, Weiming and Li, Dongsheng and Zhuang, Yueting},
 booktitle = {Advances in Neural Information Processing Systems},
 doi = {10.52202/079017-0148},
 editor = {A. Globerson and L. Mackey and D. Belgrave and A. Fan and U. Paquet and J. Tomczak and C. Zhang},
 pages = {4540--4574},
 publisher = {Curran Associates, Inc.},
 title = {{TaskBench}: Benchmarking Large Language Models for Task Automation},
 volume = {37},
 year = {2024}
}

@article{wu2026terminaluniverseturningagenttrajectories,
  title = {{Terminal-Universe}: Turning Agent Trajectories into Scalable Terminal Environments},
  author = {Jie Wu and Zhenru Zhang and Beichen Zhang and Xuwu Wang and Yuhui Su and Mouxiang Chen and Peng Wang and Zhihai Wang and Que Shen and Hao Zhou and An Yang and Fei Huang and Yujiu Yang and Dayiheng Liu},
  journal = {arXiv preprint arXiv:2609.04148},
  year = {2026}
}

@inproceedings{jiang2026midtool,
title={{MidTool}: Mid-training Data Synthesis for Agentic Tool Use},
author={Fengqing Jiang and Yite Wang and Boyi Liu and Zhaoyang Wang and Canwen Xu and Zhewei Yao and Radha Poovendran and Yuxiong He},
booktitle={Third Conference on Language Modeling},
year={2026}
}

@inproceedings{zhang2025looptool,
  title = {{LoopTool}: Closing the Data--Training Loop for Robust {LLM} Tool Calls},
  author = {Zhang, Kangning and Liu, Weiwen and Jiao, Wenxiang and Du, Kounianhua and Lu, Yuan and Zhang, Weinan and Yu, Yong},
  editor = {Liakata, Maria and Moreira, Viviane P. and Zhang, Jiajun and Jurgens, David},
  booktitle = {Proceedings of the 64th Annual Meeting of the {A}ssociation for {C}omputational {L}inguistics (Volume 1: Long Papers)},
  pages = {21144--21164},
  year = {2026},
  address = {San Diego, California, United States},
  publisher = {Association for Computational Linguistics},
  doi = {10.18653/v1/2026.acl-long.968}
}

@article{ma2026patchestotrajectories,
  title = {From Patches to Trajectories: Privileged Process Supervision for Software-Engineering Agents},
  author = {Ma, Murong and Chen, Tianyu and Lin, Yun and Lu, Shuai and Zhu, Qinglin and Gong, Yeyun and Huang, Zhiyong and Cheng, Peng and Lu, Yan and Dong, Jin Song},
  journal = {arXiv preprint arXiv:2605.21996},
  year = {2026}
}
\bibliographystyle{plainnat}

\clearpage
\appendix
\section{Experimental Protocol}
\label{app:protocol}

\subsection{Benchmarks and Metrics}

We evaluate \method{} on Tau1, Tau2, and BFCL.
Tau1 contains 115 Retail and 50 Airline tasks, while Tau2 contains
40 Retail, 20 Airline, and 40 Telecom tasks.
For both Tau benchmarks, we report native task success within each domain
and macro-average equally across domains.

BFCL contains 1,718 examples spanning four Non-live categories, three Live
categories, and Multi-turn Base.
We use the same three-group Overall aggregation as in the main text,
equally averaging the Non-live mean, Live mean, and Multi-turn score.

\subsection{Training and Evaluation Populations}
\label{app:splits}

Training-side interaction construction follows the released data
partitions of each environment.
Tau1 Retail uses separate training and evaluation populations.
Tau1 Airline uses the public 50-task identity set for training-side
interaction construction and fresh-dialogue evaluation; we therefore
treat it as a seen-task, transductive setting rather than task-disjoint
generalization.

For Tau2 Retail, Airline, and Telecom, training-side construction uses
the official train split and evaluation uses the official test split.
BFCL follows its released training-side and evaluation memberships.
This provides example-level separation, although semantically related tool
behaviors may occur across the two populations.

\subsection{Comparison Protocol}
\label{app:evaluation}

Our primary comparison pairs each \method{} model with its corresponding
public Qwen3 initialization.
Within each scale, the two models are evaluated on the same benchmark
instances with the same agent interface, tool access, decoding
configuration, user simulator where applicable, and native scorer.
These paired evaluations isolate the effect of the post-training procedure
from differences in evaluation conditions.

We additionally report public agent-training checkpoints as external
references.
These models are evaluated on the same benchmark populations and scoring
rules, while retaining model-native prompting and tool-call interfaces
where necessary.
They therefore provide cross-method context rather than matched
interventions; full results and model-specific settings are given in
Appendix~\ref{app:public-checkpoints}.

Closed-source systems in Tables~\ref{tab:current-tau} and
\ref{tab:current-bfcl} are included only as contextual references under
their reported protocols.
\label{app:published-refs}
The BFCL results for GPT-5.5, Gemini-3.0-Pro, and ToolACE-2-8B follow
Table~3 of Gecko; we use the unrefined scores rather than
the additional GATS test-time computation.

\subsection{Training Configuration}
\label{app:optimization}

All \method{} models are initialized from their corresponding public Qwen3
checkpoints and trained for one epoch with full-parameter, assistant-token
causal cross-entropy.
We use a global batch size of eight rendered examples, bfloat16 precision,
AdamW with weight decay $0.01$, a cosine learning-rate schedule with ten
warmup steps, and a maximum gradient norm of $1.0$.
Training is non-thinking throughout, with no preference-learning,
process-reward, or reinforcement-learning objective.

Among optimization hyperparameters, only the learning rate varies with
model scale: $1.1\times10^{-5}$ for 1.7B,
$7\times10^{-6}$ for 4B and 8B, and
$3.7\times10^{-6}$ for 14B and 30B-A3B.
Each learner receives the scale-conditioned training experience defined by
the structural-support procedure in Appendix~\ref{app:selection}.

\Needspace{8\baselineskip}
\section{Interaction Synthesis and Data Composition}
\label{app:data}

\subsection{Training Interaction Corpus}
\label{app:data-statistics}

We treat each complete realized interaction as a semantic unit of training
experience.
It preserves the user request, the observations exposed through native
execution, and the grounded assistant decisions that can serve as learning
targets.
Keeping these elements within one interaction preserves the dependency
between earlier evidence and later decisions.

The 1.7B--14B learners draw from a common corpus of 34,772 certified
interactions spanning 2,515 training-side tasks.
The 30B-A3B learner draws from an expanded corpus of 35,819 interactions
spanning 3,602 tasks.
Both corpora use the same compositional representation and selection
principle; the expanded corpus broadens the set of available tasks and
interaction structures.

The selected views contain 11,583, 13,986, 14,819, 15,805, and 19,937
interactions for the 1.7B, 4B, 8B, 14B, and 30B-A3B learners,
respectively, corresponding to approximately 13.0M, 18.0M, 19.4M,
20.6M, and 23.9M supervised target tokens.
Task coverage remains fixed at 2,515 tasks for the 1.7B--14B views and
expands to 3,602 tasks for 30B-A3B.

The available experience is strongly skewed toward short, direct
interactions.
Within the common corpus, 40.9\% of interactions contain at most two
supervised assistant decisions, while only 10.0\% contain a grounded
decision after an authentic error observation.
Thus corpus size alone does not imply adequate exposure to recovery,
feedback-conditioned continuation, or other cross-step dependencies.
This imbalance motivates explicit control over compositional support.

\subsection{Compositional Structure Representation}
\label{app:composition-representation}

We represent an interaction by the dependencies that connect decisions
across time rather than by its sequence of native tool names.
A later action may depend on an earlier observation, user update,
environment response, or state transition.
Examples include reusing a retrieved identifier, revising a decision after
a rejection, or satisfying a prerequisite before a later state change.

To compare such dependencies across tasks and environments, we use a shared
capability vocabulary $\mathcal K$ that abstracts semantic operations,
observations, control-flow events, and interaction horizon.
Temporal order over these elements defines a compositional path.
The representation therefore captures how decisions depend on one another,
rather than the surface API through which they are expressed.

For example, retrieving an identifier and later using it in an update
instantiates the same information-dependent structure across different
tool APIs.
Likewise, a rejected action followed by information acquisition and a
successful continuation instantiates a recovery structure independently of
the backend that produced the rejection.

A crucial distinction is between \emph{intended} and \emph{realized}
structure.
A target composition can guide task construction, but it contributes
support only when the dependency is actually realized during native
execution.
A task intended to require recovery, for example, provides no recovery
support if the collected interaction never encounters and resolves the
relevant obstacle.
Structural support is therefore defined by the learner's realized
experience, not by the generator's specification.

\paragraph{Operational encoding.}
The released representation uses fixed feature families on the
policy-visible, decision-masked interaction.
Decision kinds distinguish text, tool calls, multiple calls, and handoffs.
Tool access is mapped to read, write, irreversible write, or handoff using
fixed name rules.
Object classes cover identity, commerce, travel, finance, connectivity,
communication, location, and time; argument roles cover identifiers,
amounts, temporal values, locations, contacts, status, quantity, and content.
Names are tokenized before these dictionary lookups.
Multiple matches form a sorted composite class, while unmatched names
receive a generic class.
Values contribute their types, such as string, number, list, or object,
rather than their literal contents.
Observation descriptors record return shape and, for object-valued
returns, the roles of their top-level fields.

Set-valued coverage additionally includes call and argument counts,
decision-horizon buckets, adjacent decision/access/object transitions, and
an error-recovery indicator.
For example, horizon buckets have upper boundaries $1,2,4,8,16,32$, with
a final overflow bucket.
The capability vocabulary comprises the instantiated labels from these
families, rather than a separate label for each native API.

\paragraph{A unique path for a complete interaction.}
Each supervised assistant decision contributes one descriptor containing
its decision kind, ordered call descriptors, subsequent consecutive tool
observations, and whether visible error feedback occurred since the
preceding assistant decision.
Argument-role/type pairs are sorted within a call; calls and observations
retain their recorded order.
A call also records a bucketed count of nonempty string arguments found
in preceding visible non-assistant evidence.
This is a representation feature applied after target eligibility has
been determined, not a replacement for grounding checks.

The canonical path is the ordered sequence of these descriptors, serialized
deterministically and assigned a stable key.
Thus a trajectory is not assigned to one manually chosen dependency:
its complete supervised decision sequence determines its path, while all
of its capability atoms remain available for coverage accounting.
Exact tool names, argument values, task identities, and benchmark labels
are absent from the serialized path.
Different native interactions share a path when their descriptors match;
changing decision order, observation shape, or recovery context can
produce a different path.

\paragraph{Mapping the recorded Retail interaction.}
Consider the training interaction in
Figure~\ref{fig:trajectory2gym-grounded-recovery} and
Table~\ref{tab:app-retail-recovery-mask}.
Its assistant decisions are indexed $0$--$19$.
Decision $1$ is the failed email lookup and remains context only.
The path therefore contains the descriptors of decisions
$0,2,\ldots,19$ in that order.
Decision $0$ is a text decision without error feedback; the recovery
decision following the failed lookup carries the after-error indicator.
The masked lookup is not counted as a supervised path step, but its
authentic observation changes the descriptor of the subsequent decision.

\Needspace{6\baselineskip}
The native operations in this workflow illustrate the fixed abstraction:
\begin{itemize}[leftmargin=1.4em,itemsep=0pt,topsep=2pt]
\item \texttt{get\_order\_details}: read access, commerce object.
\item \texttt{cancel\_pending\_order}: irreversible-write access, commerce object.
\item \texttt{modify\_pending\_order\_address}: write access,
commerce+location object.
\end{itemize}
An \texttt{order\_id} argument contributes identifier/string rather than
the particular order number.
The order status, returned identifiers, and user confirmations remain
in the underlying visible history used to certify actions.
The resulting path summarizes the ordered, eligible experience; it does
not replace that history or its execution evidence.

\subsection{Synthesis from Structural Deficits}
\label{app:synthesis}

Synthesis is directed toward compositional structures that remain weakly
represented in the current interaction corpus.
Given a target path, task construction produces a user request and an
initial environment state for which successful completion requires the
corresponding dependency.

Different dependencies induce different forms of construction.
Information-dependent tasks make a value needed later in the workflow
obtainable only through earlier visible evidence.
Recovery tasks introduce a resolvable obstacle whose cause and resolution
are exposed through native feedback.
State-dependent tasks require later operations to respect information or
state established earlier in the interaction.

The resulting task is then executed in the native environment by a
collection policy.
Observations, state transitions, user responses, errors, and terminal
outcomes are therefore consequences of execution rather than annotations
inserted by the generator.
Collection-side guidance may shape the attempted interaction, but it does
not become learner-visible evidence unless it is exposed through the
native interface.

After execution, the realized interaction is mapped back to the
compositional representation.
Only structures that actually occurred contribute support, and
decision-grounded supervision determines which realized assistant decisions
can become learning targets.
Subsequent synthesis is then directed toward structures that remain
under-supported.

Thus intended structure determines \emph{where to generate}, whereas
realized structure determines \emph{what experience was actually obtained}.

\section{Decision-Grounded Supervision}
\label{app:certification}

Decision grounding asks whether a realized assistant decision can be
reconstructed from the information available to the policy when that
decision is made.
This differs from execution validity: an action may be accepted by the
environment while still relying on information unavailable to the learner.
The grounding procedure therefore separates what happened in the collected
interaction from which decisions constitute reproducible supervision.

\subsection{Policy-Visible Evidence}

For a decision at step $t$, grounding is defined relative to the
policy-visible history $V(H_{<t})$.
Only information exposed through the native agent interface can justify the
decision.
User messages and assistant-visible tool observations may therefore provide
evidence, whereas collector-private state and private external operations
cannot directly support a target.

The relevant notion is information availability rather than textual
overlap.
A decision is grounded when the information needed to choose it is
recoverable from the preceding visible interaction.
For example, an identifier known only to the collection policy cannot
justify an assistant action even if the resulting call succeeds.
If the same identifier is subsequently returned through a visible tool
observation, later decisions may legitimately depend on it.

The same principle applies to dynamically available tools.
When tool discovery is part of the native interface, a tool can support a
supervised action only after its identity and usable interface have become
visible to the assistant.
More generally, arguments referring to users, orders, items, accounts, or
other environment entities must be supported by preceding visible evidence.

Private events may change the underlying environment without entering the
student history.
Their consequences become usable only when they are subsequently exposed
through an observation available to the assistant.
Grounding therefore preserves the distinction between latent environment
state and evidence available to the acting policy.
These checks implement specific requirements for visible provenance;
they do not prove that every admitted action is uniquely determined by
its visible history or verify every possible reason for choosing it.

\subsection{Decision and Workflow Eligibility}

A supervised decision must satisfy both execution and information
requirements.
Execution validity asks whether the action is admissible under the native
environment semantics; decision grounding asks whether its choice is
supported by the policy-visible history.
Neither condition subsumes the other.

This distinction is especially important for successful trajectories.
Trajectory-level success certifies an episode outcome, but does not imply
that every intermediate action is an appropriate imitation target.
A successful action that depends on collector-private information remains
ungrounded, while a grounded decision may occur only after several
observations or clarifications have accumulated.

For tasks with ordered state changes, local eligibility is supplemented by
workflow consistency.
Required operations must appear in the realized interaction in an order
compatible with the task.
Information gathering, user clarification, and recovery may occur between
those operations, but a rejected or masked action cannot satisfy a required
workflow step simply because the trajectory eventually succeeds.

The resulting supervision therefore distinguishes three levels:
\emph{trajectory success}, which concerns completion of the interaction;
\emph{execution validity}, which concerns whether an individual action is
accepted by the environment; and \emph{decision grounding}, which concerns
whether that action is justified by information available when it is taken.
Only decisions satisfying the relevant levels contribute imitation loss.

Native environments occasionally provide action-level validation without a
corresponding observation.
Such validation can authorize the action as a target, but does not create an
unobserved action--observation transition.

\subsection{Failure as Evidence for Recovery}

An action need not be a valid imitation target to contribute useful
information to the interaction.
This is particularly important for recovery.

When an assistant action fails, the failed action itself receives no
imitation loss.
If the action and its authentic error response are visible through the
native interface, however, they remain part of the policy-visible history.
A later recovery decision can then be supervised from the richer context
created by that feedback.
The learner therefore observes the consequence of the failed attempt
without being trained to reproduce the failure.

Repeated actions are treated according to the same principle.
A repeated read after a state transition may reveal new information and
support a new decision, even when its serialized action matches an earlier
call.
Likewise, a repeated operation may be required by the task after the
environment state has changed.
Surface repetition alone is therefore insufficient to identify redundancy.

By contrast, a repeated action that contributes no new evidence, state
change, or required operation provides no additional compositional support.
The distinction is semantic: what matters is whether the repeated action
changes the information or state on which subsequent decisions depend.

This treatment allows one realized interaction to contain both masked
failures and valid supervision for their recovery.
Figure~\ref{fig:trajectory2gym-grounded-recovery} provides a concrete
example in Appendix~\ref{app:native-interaction-cases}.

\section{Structural-Support Learning}
\label{app:selection}

Grounding determines which realized decisions can provide valid
supervision, but it does not determine how certified experience should be
distributed across interaction structures.
Structural-support learning addresses two remaining questions:
which interactions should be retained for a learner, and how much aggregate
optimization weight each retained interaction should receive.

\subsection{Structural Coverage and Support Depth}

We distinguish whether a compositional structure is represented from how
much repeated support it receives.
Let $\mathcal P(\tau)$ denote the compositional structures realized by
interaction $\tau$.
For repeated-support accounting, each interaction is associated with one
canonical path $p(\tau)\in\mathcal P(\tau)$.
This assignment does not imply that the interaction contains only one
useful dependency; its remaining capability content continues to contribute
to structural coverage.

For a selected set $\mathcal S$, the support of path $p$ is
\begin{equation}
    n_{\mathcal S}(p)
    =
    \sum_{\tau\in\mathcal S}
    \mathbf{1}[p(\tau)=p].
    \label{eq:app-path-support}
\end{equation}

For learner scale $s$, let $d_s$ denote the desired support depth.
The remaining deficit of path $p$ is
\begin{equation}
    \delta_s(p;\mathcal S)
    =
    \bigl[d_s-n_{\mathcal S}(p)\bigr]_+.
    \label{eq:app-support-deficit}
\end{equation}

For an unselected interaction $\tau$, its contribution to repeated
structural support is
\begin{equation}
\begin{aligned}
    \Delta_s(\tau\mid\mathcal S)
    &=
    \delta_s(p(\tau);\mathcal S)
    -
    \delta_s\!\left(
        p(\tau);
        \mathcal S\cup\{\tau\}
    \right) \\
    &=
    \mathbf{1}
    \bigl[
        n_{\mathcal S}(p(\tau))<d_s
    \bigr].
\end{aligned}
\label{eq:marginal-structural-support}
\end{equation}

Thus $d_s=1$ corresponds to path coverage, while larger values allocate
repeated experience to structures that are already represented.
For the scaling study, we use $d_s=1,2,3,4,$ and $8$ for the 1.7B, 4B,
8B, 14B, and 30B-A3B learners, respectively.

The depth schedule specifies how training experience is allocated; it is
not a monotonic claim about the amount of support that maximizes downstream
performance.
The fixed-8B sensitivity study in Table~\ref{tab:support-depth} confirms
this distinction: increasing depth adds interactions and target tokens, but
the three-benchmark mean does not improve monotonically.

\subsection{Experience Allocation}

Selection balances two complementary objectives:
preserving breadth over tasks and compositional structures, and providing
repeated support for structures that remain under-represented.

We begin from a fixed certified core shared across training views.
Selection then proceeds hierarchically.
Task coverage preserves completing demonstrations across the available
training tasks; structural coverage adds interactions that expose previously
uncovered capability atoms and canonical paths; finally, support expansion
allocates additional interactions toward paths with positive deficit
$\delta_s$.
This ordering prevents repeated exposure to frequent structures from
displacing broad task and capability coverage.

Within each stage, interactions that provide comparable structural support
are distinguished by secondary criteria.
Task-level selection favors concise, completing demonstrations, while
structural selection favors interactions that add visible capability
content with limited redundant supervision.
Recovery and branching are retained when they contribute additional
interaction structure rather than merely increasing trajectory length.

The resulting procedure is deliberately independent of downstream
evaluation outcomes.
Benchmark identity, evaluation score, teacher identity, and test-task
identity are not used as selection features.
Likewise, negative log-likelihood under the public initialization does not
determine which interactions are retained.
Training-set composition therefore emerges from task breadth,
compositional coverage, and structural-support deficits rather than from
benchmark-specific quotas or difficulty scores.

Algorithm~\ref{alg:structural-support-allocation} makes this ordering explicit.
For candidate $\tau$, write $T_\tau>0$ for its certified target-token count,
$u(\tau)$ for its training task, and $\mathcal K_\tau$ for its capability
atoms.
The task-anchor key $A(\tau)$ orders candidates by fewer exact repeated
calls, fewer total calls, fewer certified target tokens, more supervised
text decisions, and finally ascending trace key.
Structural comparisons maximize the lexicographic key
$Q(\tau)=(-\rho_\tau,-\lambda_\tau,r_\tau,b_\tau,e_\tau,-T_\tau,h_\tau)$:
lower unsupported-reference risk and redundancy precede higher recovery,
branching, and cost-efficiency features, followed by token count and a
deterministic trace key.
These features are computed before allocation; $h_\tau$ is not a quality
score.
Recovery, branching, and cost efficiency use the corpus-normalized feature
values supplied to the selector.

\begin{algorithm}[!htbp]
\caption{Structural-support allocation within a fixed eligible corpus}
\label{alg:structural-support-allocation}
\small
\begin{algorithmic}[1]
\Require Eligible interactions $\mathcal C$, retained core $\mathcal C_0$,
ordered depths $\{d_s\}$, per-task expansion cap $K$
\Ensure Selected view $\mathcal S_s$ for each requested depth
\State $\mathcal S\gets\mathcal C_0$; initialize task/path counts and covered atoms
\For{each uncovered training task $u$, in canonical order}
    \State $\operatorname{admit}(\arg\min_{\tau\in\mathcal C:\,u(\tau)=u} A(\tau))$
\EndFor
\While{some candidate capability atom is uncovered}
    \State $g(\tau)\gets|\mathcal K_\tau\setminus\bigcup_{\sigma\in\mathcal S}\mathcal K_\sigma|$
    \State $\mathcal F\gets\{\tau\in\mathcal C\setminus\mathcal S:
        c_{\mathcal S}(u(\tau))<K,\ g(\tau)>0\}$
    \If{$\mathcal F=\varnothing$}
        \State \textbf{stop}: coverage is infeasible under the supplied cap
    \EndIf
    \State $\operatorname{admit}
        (\arg\max_{\tau\in\mathcal F}(g(\tau)/T_\tau,g(\tau),Q(\tau)))$
\EndWhile
\For{each requested depth $d_s$, in increasing order}
    \For{$k=1,\ldots,d_s$}
        \State Build a queue with the highest-$Q$ eligible candidate of each path
        \Statex \hspace{\algorithmicindent}\hspace{\algorithmicindent}
            whose current support is below $k$
        \For{each queued $\tau$, in descending $(Q(\tau),p(\tau))$ order}
            \If{$n_{\mathcal S}(p(\tau))<k$ and $c_{\mathcal S}(u(\tau))<K$}
                \State $\operatorname{admit}(\tau)$
            \EndIf
        \EndFor
    \EndFor
    \State $\mathcal S_s\gets\mathcal S$
\EndFor
\end{algorithmic}
\end{algorithm}

Here $\operatorname{admit}$ adds one unselected interaction and updates
all counts and covered atoms; $c_{\mathcal S}(u)$ denotes the task count
in the algorithm.
The retained core takes precedence over the expansion cap.
Candidates queued for support expansion must be unselected and below
the task cap when the queue is built, and the cap is checked again at
admission because earlier choices can consume the remaining capacity.
Coverage closes before repeated support is added.
Within each support level, the queue is built once, giving each eligible
under-supported path an opportunity before advancing to the next level.

Allocation stops after the requested levels have been processed.
Depth is a support target, not a hard upper bound: retained interactions
are never removed to reduce an over-supported path, and candidate
availability or the task cap can leave another path below the target.
For successive depths on the same corpus, previously selected
interactions remain selected.
The emitted view is ordered by task, path, and trace key; rendering and
parent-normalized training are applied afterward.

\paragraph{Worked support allocation.}
For illustration, suppose core retention and breadth coverage have produced
four selected interactions with path counts
$(n_{\mathcal S}(p_A),n_{\mathcal S}(p_B),n_{\mathcal S}(p_C))=(3,1,0)$.
Assume that the task and atom requirements are already satisfied and that
additional candidates exist for $p_B$ and $p_C$.
At depth two, the first support level adds one $p_C$ interaction, producing
$(3,1,1)$.
The second level can then add one interaction for each of $p_B$ and $p_C$,
producing $(3,2,2)$ when the task cap does not bind.
The three existing $p_A$ interactions remain selected even though their
support already exceeds the requested depth.

Now impose $K=3$ and let two selected $p_A$ interactions and the selected
$p_B$ interaction belong to task $u_1$.
Its selected-task count is already three.
Let the remaining $p_A$ interaction and all available $p_C$ candidates
belong to task $u_2$.
If every additional $p_B$ candidate belongs to $u_1$, those candidates
are ineligible for expansion, regardless of their quality ordering.
The two support levels can still add two $p_C$ interactions to $u_2$,
yielding $(3,1,2)$.
The remaining deficit on $p_B$ reflects a binding feasibility constraint,
not a missing update to the support counter.

\Needspace{5\baselineskip}
This example separates the roles of the three controls.
Core retention preserves selected experience, support depth determines
which paths are eligible for additional repetition, and the task cap
limits concentration within one training task.
Quality priorities order candidates within these constraints.
A larger depth neither removes retained experience nor overrides a
saturated task cap, and increasing it need not increase every path's
support.
Equation~\eqref{eq:marginal-structural-support} characterizes one component
of the allocation rule; it does not imply global optimality of the complete
multi-stage selector.
Appendix~\ref{app:theory-selection} analyzes the idealized deficit stage
under a pure cardinality budget.

\subsection{Parent-Normalized Learning Objective}
\label{app:parent-normalization}

Structural selection operates on complete semantic interactions, whereas a
single interaction may render into multiple native training examples.
A conventional row-averaged objective would therefore assign greater
optimization mass to interactions that happen to produce more rendered
examples.
We instead normalize the objective at the semantic-interaction level.

Suppose selected interaction $i$ renders into examples
$j\in\mathcal J_i$.
Let $m_{ijk}\in\{0,1\}$ denote the causal supervision mask for token $k$
in rendered example $j$, let $n_{ij}=\sum_k m_{ijk}>0$ be its number of
supervised tokens, and let $N_i=\sum_{j\in\mathcal J_i}n_{ij}$ be the total
number of supervised tokens associated with interaction $i$.
The mean certified-token loss of rendered example $j$ is
\begin{equation}
    \ell_{ij}(\theta)
    =
    -\frac{1}{n_{ij}}
    \sum_k
    m_{ijk}
    \log
    \pi_\theta
    \bigl(
        y_{ijk}
        \mid
        \widetilde h_{ij,<k}
    \bigr).
    \label{eq:app-child-loss}
\end{equation}

If $P_s$ semantic interactions are selected for learner scale $s$, the
parent-normalized empirical objective is
\begin{equation}
    \mathcal L_{\mathrm{parent}}(\theta)
    =
    \frac{1}{P_s}
    \sum_i
    \sum_{j\in\mathcal J_i}
    \frac{n_{ij}}{N_i}
    \ell_{ij}(\theta).
    \label{eq:app-parent-risk}
\end{equation}

The inner weights sum to one for every semantic interaction.
Consequently, each selected interaction receives the same aggregate
coefficient $1/P_s$, while its certified target occurrences share that
mass internally.
This separates the intended importance of an interaction from the number
of rendered examples used to represent it.

If $M_s=\sum_i|\mathcal J_i|$ is the number of rendered examples,
the same objective can be implemented with per-example coefficient
\begin{equation}
    \omega_{ij}
    =
    \frac{M_s}{P_s}
    \frac{n_{ij}}{N_i}.
    \label{eq:app-child-weight}
\end{equation}
Indeed,
\begin{equation}
    \frac{1}{M_s}
    \sum_i
    \sum_{j\in\mathcal J_i}
    \omega_{ij}\ell_{ij}(\theta)
    =
    \mathcal L_{\mathrm{parent}}(\theta).
    \label{eq:app-normalization-equivalence}
\end{equation}

Hence, for a fixed set of certified target occurrences and unchanged
conditioning histories, repartitioning one semantic interaction into a
different number of rendered examples does not change its intended
aggregate weight.
This is the rendering-invariance property analyzed formally in
Appendix~\ref{app:theory-parent}.
Finite optimization trajectories may still depend on sample ordering and
batch construction.

\section{Theoretical Notes on the Construction}
\label{app:theory}

This appendix records elementary properties of the three stages of
Section~\ref{sec:method}, in the notation of
Section~\ref{sec:formulation}. They are not generalization guarantees;
they make precise what each stage can and cannot do, and they are the
basis for the justifications given in the main text.

\subsection{Grounding and Hidden Information}
\label{app:theory-grounding}

A collected decision is a triple $(c, z, a)$: the student-visible context
$c = c_t$ of Eq.~\eqref{eq:tool-use-policy}, the collector-private
information $z$ (teacher hints, private events, identifiers known only to
the collector), and the assistant action $a$. Let $(C, Z, A)$ denote a
draw from the collection distribution restricted to any set of decisions,
and let $H$ and $I$ denote conditional entropy and mutual information.
A student is any policy $\pi_\theta(a \mid c)$; it never observes $z$.

\begin{proposition}[Grounding floor]
\label{prop:grounding}
For every student policy,
\begin{equation}
  \mathbb E\bigl[-\log \pi_\theta(A \mid C)\bigr]
  \;\ge\; H(A \mid C)
  \;=\; H(A \mid C, Z) + I(A; Z \mid C),
  \label{eq:grounding-floor}
\end{equation}
with equality in the inequality if and only if
$\pi_\theta(\cdot \mid c) = p(\cdot \mid c)$ for almost every $c$. If
the collector acts deterministically given $(c, z)$, the floor equals
$I(A; Z \mid C)$, the information about the action that is carried by
private state and unavailable to the student.
\end{proposition}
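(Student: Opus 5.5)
The inequality is Gibbs' inequality applied conditionally on $C$. Let $p(a\mid c)$ be the collection conditional of $A$ given $C$. For any student policy, I would decompose the expected log loss as
\begin{equation*}
  \mathbb E\bigl[-\log \pi_\theta(A\mid C)\bigr]
  = \mathbb E\bigl[-\log p(A\mid C)\bigr]
  + \mathbb E_C\Bigl[\mathrm{KL}\bigl(p(\cdot\mid C)\,\big\|\,\pi_\theta(\cdot\mid C)\bigr)\Bigr].
\end{equation*}
This follows by adding and subtracting $\log p(A\mid C)$ inside the expectation and then conditioning on $C$. The first term is $H(A\mid C)$ by definition. The second term is nonnegative, so the inequality holds.

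For the equality case, the averaged KL vanishes if and only if $\mathrm{KL}(p(\cdot\mid c)\,\|\,\pi_\theta(\cdot\mid c))=0$ for $P_C$-almost every $c$. That in turn holds if and only if $\pi_\theta(\cdot\mid c)=p(\cdot\mid c)$ almost everywhere, which is the stated condition.

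The identity $H(A\mid C)=H(A\mid C,Z)+I(A;Z\mid C)$ is the chain rule, i.e.\ the definition of conditional mutual information: $I(A;Z\mid C)=H(A\mid C)-H(A\mid C,Z)$. For the deterministic case, I would assume $A=f(C,Z)$ almost surely. Then $p(\cdot\mid c,z)$ is a point mass, so $H(A\mid C,Z)=0$ and the floor reduces to $I(A;Z\mid C)$.

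The main obstacle is measure-theoretic rather than conceptual. Actions are token sequences from a countable space, so discrete entropy is well defined. However, $C$ and $Z$ may have large or continuous ranges, so I need regular conditional distributions to exist. I also need to handle infinite values: if $H(A\mid C)=\infty$, the inequality holds trivially, and the chain rule should be read with the convention that both sides may be infinite. To keep the argument clean, I would first state it for finite or countable $(C,Z,A)$, where every quantity is a finite sum. I would then note that the same KL decomposition carries over verbatim using conditional laws, and I would assume $H(A\mid C)<\infty$ whenever the subtraction in the mutual-information identity is used.
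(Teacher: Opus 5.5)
Your argument is the paper's proof: the gap between the student's log loss and $H(A\mid C)$ is $\mathbb E_C[\mathrm{KL}(p(\cdot\mid C)\,\|\,\pi_\theta(\cdot\mid C))]\ge 0$ by Gibbs' inequality, the identity is the chain rule, and determinism gives $H(A\mid C,Z)=0$. One correction to your technical aside: $H(A\mid C)=\infty$ is not the trivial case (an infinite left-hand side is); it needs the extended Gibbs bound $p\log(p/q)\ge p-q$ to show every student's loss is infinite, and then every policy attains equality, so your add-and-subtract step and the equality characterization both require $H(A\mid C)<\infty$, an assumption the paper also leaves implicit.
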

\begin{proof}
$\mathbb E[-\log \pi_\theta(A\mid C)] - H(A\mid C)
= \mathbb E_C\bigl[\mathrm{KL}\bigl(p(\cdot\mid C)\,\|\,\pi_\theta(\cdot\mid C)\bigr)\bigr]
\ge 0$ by Gibbs' inequality, with equality if and only if the two
conditionals agree almost surely. The decomposition is the chain rule
$H(A\mid C) = H(A\mid C,Z) + I(A;Z\mid C)$, and $H(A\mid C,Z)=0$ when
$A$ is a function of $(C,Z)$.
\end{proof}

\begin{remark}
The grounding check excludes targets whose identifier-like arguments
lack preceding visible support. Exclusion by this operational rule does
not by itself imply $I(A;Z\mid C)>0$ in the collection distribution.
When an action retains dependence on collector-private information after
conditioning on visible context, Proposition~\ref{prop:grounding}
identifies the additional optimal log loss faced by a student restricted
to $C$ relative to one observing $(C,Z)$. This motivates excluding
unsupported targets; it does not assert that every excluded target has a
strictly positive information term or that every admitted action is
uniquely determined by $C$.
\end{remark}

\begin{remark}[Certification is a nested filter]
Since $m_t = R_\tau\,\mathbf 1[q_\tau \preceq A_\tau]\,e_t\,v_t$ is a
product of indicators, the certified decisions are exactly those passing
all four checks, a decision failing any one check is excluded regardless
of the others, and the eligible pool $\{\tau : \sum_t m_t > 0\}$ is a
subset of the success-filtered pool $\{\tau : R_\tau = 1\}$.
Certification therefore only ever removes targets relative to outcome
filtering; Proposition~\ref{prop:grounding} identifies what is removed.
\end{remark}

\subsection{Deficit-Driven Selection Maximizes Capped Support}
\label{app:theory-selection}

For a candidate pool $\mathcal C$, a selected set
$\mathcal S \subseteq \mathcal C$, and a cap $d \ge 1$, define the capped
support
\begin{equation}
  F_d(\mathcal S) = \sum_{p \in \mathcal P} \min\bigl(n_{\mathcal S}(p),\, d\bigr),
  \qquad
  \sum_{p \in \mathcal P} \delta_d(p;\mathcal S) = d\,|\mathcal P| - F_d(\mathcal S),
  \label{eq:capped-support}
\end{equation}
where $\mathcal P$ is the set of feasible canonical paths and
$\delta_d(p;\mathcal S) = [d - n_{\mathcal S}(p)]_+$ as in
Eq.~\eqref{eq:support-deficit}. Reducing the total deficit is therefore
the same as increasing $F_d$. Let $N_p$ be the number of pool candidates
whose canonical path is $p$.

\begin{proposition}[Exactness of the deficit stage]
\label{prop:selection}
\begin{enumerate}[leftmargin=*,itemsep=1pt,topsep=1pt]
\item $F_d$ is monotone and submodular on subsets of $\mathcal C$, with
marginal gain $\Delta_d(\tau\mid\mathcal S) = \mathbf 1[n_{\mathcal S}(p(\tau)) < d]$,
which is Eq.~\eqref{eq:marginal-structural-support}.
\item Starting from any fixed set $\mathcal S_0$ and adding at most $B$
candidates, the greedy rule that repeatedly adds a candidate with
positive marginal gain attains
\begin{equation}
  \max_{\mathcal S \supseteq \mathcal S_0,\ |\mathcal S\setminus\mathcal S_0| \le B}
  F_d(\mathcal S)
  = \min\Bigl(F_d(\mathcal S_0) + B,\ \sum_{p \in \mathcal P} \min(N_p, d)\Bigr).
  \label{eq:greedy-optimum}
\end{equation}
\item Without a cap ($d = \infty$), $F_\infty(\mathcal S) = |\mathcal S|$
is modular: every budget-feasible selection is optimal and the objective
expresses no preference among paths. The cap is what encodes breadth,
and $d = 1$ counts covered paths.
\end{enumerate}
\end{proposition}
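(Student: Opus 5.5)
The plan is to reduce everything to the per-path decomposition of $F_d$. Each candidate has exactly one canonical path, so $n_{\mathcal S}(p)$ counts the selected candidates in the block $\mathcal C_p=\{\tau:p(\tau)=p\}$. Hence $F_d(\mathcal S)=\sum_p \phi(|\mathcal S\cap\mathcal C_p|)$ with $\phi(n)=\min(n,d)$, and each summand depends only on the count inside one block.

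For item~1, adding $\tau\notin\mathcal S$ raises only $n_{\mathcal S}(p(\tau))$, and by exactly one. The marginal gain is therefore $\phi(n+1)-\phi(n)=\mathbf 1[n<d]$ with $n=n_{\mathcal S}(p(\tau))$. This gain is nonnegative, which gives monotonicity, and it reproduces Eq.~\eqref{eq:marginal-structural-support}. For submodularity, take $\mathcal S\subseteq\mathcal S'$ and $\tau\notin\mathcal S'$. Then $n_{\mathcal S}(p(\tau))\le n_{\mathcal S'}(p(\tau))$, so $\mathbf 1[n_{\mathcal S}<d]\ge\mathbf 1[n_{\mathcal S'}<d]$, which is the diminishing-returns inequality. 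Equivalently, $F_d$ is a sum of concave functions of modular block counts.

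For item~2, I first prove the upper bound and then show greedy attains it. For the upper bound, each added candidate raises $F_d$ by at most $1$, so $F_d(\mathcal S)\le F_d(\mathcal S_0)+B$. Also $n_{\mathcal S}(p)\le N_p$ for every $p$, so $F_d(\mathcal S)\le\sum_p\min(N_p,d)$. For attainment, run greedy, which adds any candidate with gain $1$. Suppose it stops after $k<B$ additions. Then no remaining candidate has positive gain, so for every path $p$ either $n_{\mathcal S}(p)\ge d$ or all $N_p$ candidates of $p$ are already selected. In both cases $\min(n_{\mathcal S}(p),d)=\min(N_p,d)$, so $F_d$ equals the second term. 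If instead greedy makes all $B$ additions, each had gain exactly $1$, so $F_d=F_d(\mathcal S_0)+B$. Either way greedy meets the minimum of the two bounds, and since no feasible set exceeds that minimum, this proves both equality in Eq.~\eqref{eq:greedy-optimum} and optimality of greedy. Note that any greedy order works, with no $(1-1/e)$ loss, because all positive gains equal $1$ and are interchangeable.

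For item~3, with $d=\infty$ we have $\min(n,\infty)=n$, so $F_\infty(\mathcal S)=\sum_p n_{\mathcal S}(p)=|\mathcal S|$, since each candidate lies in exactly one path. This is modular. Every selection that uses the full budget is optimal, and the value does not depend on which paths are chosen. With $d=1$, $\min(n,1)=\mathbf 1[n\ge1]$, so $F_1$ counts covered paths.

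The main obstacle is the stopping case of item~2. The step where care is needed is showing that "no positive-gain candidate remains" forces $\min(n_{\mathcal S}(p),d)=\min(N_p,d)$ for every $p$, including paths already over-supported in $\mathcal S_0$, where $n\ge d$ holds and the equality is immediate.
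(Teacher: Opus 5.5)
Your proof is correct and follows the paper's argument essentially step for step: the per-path decomposition with the concave cap $\min(n,d)$ for item~1, the two upper bounds plus gain-one achievability for item~2 (your early-stopping case is the contrapositive of the paper's ``whenever $F_d(\mathcal S)<\sum_p\min(N_p,d)$ a gain-one candidate exists''), and $\min(n,\infty)=n$ for item~3. Your direct diminishing-returns check and your restriction of item~3 to selections that use the full budget, where the paper says ``every budget-feasible selection'', are small but accurate refinements of the paper's wording.
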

\begin{proof}
(1) Each candidate contributes to exactly one path, so
$n_{\mathcal S}(p)$ is modular in $\mathcal S$, and $n \mapsto \min(n, d)$
is nondecreasing and concave; a sum of nondecreasing concave functions
of modular functions is monotone submodular. Adding $\tau$ raises
$\min(n_{\mathcal S}(p(\tau)), d)$ by one exactly when
$n_{\mathcal S}(p(\tau)) < d$.
(2) Upper bound: each added candidate raises $F_d$ by at most one, and
$F_d(\mathcal S) \le \sum_p \min(N_p, d)$ for every
$\mathcal S \subseteq \mathcal C$. Achievability: whenever
$F_d(\mathcal S) < \sum_p \min(N_p, d)$ there is a path with
$n_{\mathcal S}(p) < \min(N_p, d)$, hence an unselected candidate of that
path with marginal gain one; greedy adds one per step, so $F_d$ grows by
exactly one per step until the budget or the bound is reached.
(3) Immediate from $\min(n,\infty) = n$.
\end{proof}

\begin{remark}
The implemented selection has several stages (a fixed core, one
completing exemplar per task, atom and path coverage, then deficit
reduction) and breaks ties by grounding risk, recovery, branching,
redundancy, and cost (Appendix~\ref{app:selection}).
Proposition~\ref{prop:selection} concerns the deficit stage with
$\mathcal S_0$ the output of the earlier stages: its greedy order is exact
for the capped objective, and tie-breaking only chooses among candidates
with equal marginal gain. The coverage stage is the $d = 1$ case.
\end{remark}

\subsection{Random Surplus Selection Misses Rare Paths}
\label{app:theory-random}

\begin{proposition}[Miss probability under uniform selection]
\label{prop:random}
Suppose $B$ surplus candidates are drawn uniformly without replacement
from a pool of $N$ candidates, and path $p$ has $N_p$ candidates in that
pool. Then
\begin{equation}
  \Pr\bigl[n_{\mathcal S}(p) = 0\bigr]
  = \frac{\binom{N - N_p}{B}}{\binom{N}{B}}
  = \prod_{j=0}^{N_p - 1} \frac{N - B - j}{N - j}
  \;\le\; \Bigl(1 - \frac{B}{N}\Bigr)^{N_p},
  \label{eq:miss-probability}
\end{equation}
with equality for $N_p = 1$: a path represented once is missed with
probability exactly $1 - B/N$. The expected number of covered paths is
$\sum_p \bigl(1 - \Pr[n_{\mathcal S}(p) = 0]\bigr)$.
\end{proposition}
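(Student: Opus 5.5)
The plan is a direct counting argument for the hypergeometric zero event, followed by a termwise comparison. Under uniform selection without replacement, all $\binom{N}{B}$ subsets of size $B$ are equally likely. The event $n_{\mathcal S}(p)=0$ occurs exactly when the chosen subset avoids all $N_p$ candidates of path $p$, that is, when it is one of the $\binom{N-N_p}{B}$ subsets of the remaining pool. This gives the first equality immediately. We adopt the convention $\binom{N-N_p}{B}=0$ when $B>N-N_p$; in that case both sides of the bound are handled by the product form below.

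For the product form, we expand both binomials as falling factorials and cancel the common $B!$:
\[
\frac{\binom{N-N_p}{B}}{\binom{N}{B}}=\frac{(N-N_p)!\,(N-B)!}{N!\,(N-N_p-B)!}.
\]
We then regroup this as $\prod_{j=0}^{N_p-1}(N-B-j)/(N-j)$. One way to see this is to view the ratio as $\binom{N-B}{N_p}/\binom{N}{N_p}$, the probability that the $N_p$ path-$p$ items all fall among the $N-B$ unselected positions. Alternatively, one can place the path-$p$ items sequentially: the $(j+1)$-th of them must land among the $N-B-j$ remaining unselected slots out of $N-j$ remaining slots. Either route yields the telescoped product directly, and it automatically produces a zero factor when $N_p>N-B$. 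That keeps the formula valid without any case split.

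For the inequality, it suffices to compare factors termwise. Each factor satisfies $(N-B-j)/(N-j)=1-B/(N-j)\le 1-B/N$ for $0\le j<N$, since $N-j\le N$. Every factor is nonnegative, because $j\le N_p-1$ and any factor with $N-B-j<0$ is preceded by a zero factor. Multiplying the $N_p$ factors therefore gives $(1-B/N)^{N_p}$ as an upper bound. When $N_p=1$ the product has the single factor $j=0$, which equals $1-B/N$ exactly, so the stated equality holds. The expected-coverage claim is linearity of expectation applied to the indicators $\mathbf 1[n_{\mathcal S}(p)\ge 1]$, summed over $p$.

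The only real care point is nonnegativity of the factors when $B$ is close to $N$. A factor with $N-B-j<0$ cannot occur before a zero factor, since the first index where $N-B-j\le 0$ gives exactly zero, so the termwise multiplication of inequalities is legitimate. Beyond that, everything reduces to routine algebra.
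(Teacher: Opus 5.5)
Your proof is correct and follows the paper's own route: count the $B$-subsets that avoid all $N_p$ candidates of $p$, rewrite the ratio as the telescoped product, and bound each factor $(N-B-j)/(N-j)=1-B/(N-j)$ by $1-B/N$, which the paper phrases as the factors being nonincreasing in $j$. One small correction to your degenerate-case care: ``every factor is nonnegative'' is not literally true, since factors after the zero one can be negative, so state it as follows: if $N_p>N-B$ the product is exactly $0\le(1-B/N)^{N_p}$, and otherwise all factors are nonnegative and termwise multiplication applies.
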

\begin{proof}
$n_{\mathcal S}(p)$ is hypergeometric; the product form counts the
$B$-subsets that avoid all $N_p$ candidates of $p$, and each factor is at
most $(N - B)/N$ because $(N-B-j)/(N-j)$ is nonincreasing in $j$.
\end{proof}

\begin{remark}
Coverage-first selection covers every feasible path whenever the budget
allows (Proposition~\ref{prop:selection} with $d = 1$), whereas uniform
selection misses rare paths with a probability that decays only
geometrically in $N_p$. Taking the overall selection ratio of the
fixed-8B comparison as an illustration
($B/N \approx 14{,}819/34{,}772 \approx 0.43$), a path represented once is
missed with probability about $0.57$ and one represented three times with
probability about $0.19$. This is the mechanism behind the 942 canonical
paths that random surplus selection left uncovered at the same budget
(Figure~\ref{fig:fixed8b-diagnostics}); the actual control draws the
surplus after fixed anchors, so the numbers are illustrative.
\end{remark}

\subsection{Parent Normalization}
\label{app:theory-parent}

Let parent $i$ render into children $j \in \mathcal J_i$ with certified
target counts $n_{ij}$ and $N_i = \sum_j n_{ij}$, and let $P_s$ and
$M_s = \sum_i |\mathcal J_i|$ be the numbers of parents and children.
Write $\ell_{ij}$ for the child-mean loss of
Eq.~\eqref{eq:app-parent-risk}, and let
$\mathcal L_{\mathrm{row}} = \frac{1}{M_s}\sum_{i,j} \ell_{ij}$ denote
per-example (row) averaging.

\begin{proposition}[Training mass under the two objectives]
\label{prop:parent}
\begin{enumerate}[leftmargin=*,itemsep=1pt,topsep=1pt]
\item Under $\mathcal L_{\mathrm{row}}$, parent $i$ receives total
coefficient $|\mathcal J_i| / M_s$, so the coefficients of two parents are
in the ratio of their child counts; under $\mathcal L_{\mathrm{parent}}$
every parent receives $1/P_s$.
\item Under $\mathcal L_{\mathrm{parent}}$ every certified target
occurrence of parent $i$ has weight $1/(P_s N_i)$; under
$\mathcal L_{\mathrm{row}}$ an occurrence in child $j$ has weight
$1/(M_s n_{ij})$, so occurrences in short children are up-weighted
relative to those in long children.
\item For a fixed multiset of target occurrences with fixed conditioning
prefixes, $\mathcal L_{\mathrm{parent}}$ is invariant to how the
occurrences are partitioned into children, whereas
$\mathcal L_{\mathrm{row}}$ is not.
\item The implementation's child coefficients
$\omega_{ij} = \alpha\, n_{ij}/N_i$ with $\alpha = M_s/P_s$ satisfy
$\frac{1}{M_s}\sum_{i,j}\omega_{ij}\ell_{ij} = \mathcal L_{\mathrm{parent}}$,
Eq.~\eqref{eq:app-normalization-equivalence}.
\end{enumerate}
\end{proposition}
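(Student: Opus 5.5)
The plan is to expand both objectives into explicit per-token sums and read every item off those expansions. Using the child-mean loss of Eq.~\eqref{eq:app-child-loss}, write $\ell_{ij}=\frac{1}{n_{ij}}\sum_k m_{ijk}\lambda_{ijk}$ with $\lambda_{ijk}=-\log\pi_\theta(y_{ijk}\mid\widetilde h_{ij,<k})$. Then
\[
\mathcal L_{\mathrm{row}}=\frac{1}{M_s}\sum_{i,j}\frac{1}{n_{ij}}\sum_k m_{ijk}\lambda_{ijk},
\qquad
\mathcal L_{\mathrm{parent}}=\frac{1}{P_s}\sum_i\frac{1}{N_i}\sum_{j}\sum_k m_{ijk}\lambda_{ijk},
\]
where the second form uses $\frac{n_{ij}}{N_i}\ell_{ij}=\frac{1}{N_i}\sum_k m_{ijk}\lambda_{ijk}$ from Eq.~\eqref{eq:app-parent-risk}. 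Part (2) is read off directly. Under the parent objective, every occurrence in parent $i$ has coefficient $1/(P_sN_i)$. Under the row objective, an occurrence in child $j$ has coefficient $1/(M_sn_{ij})$, which is decreasing in $n_{ij}$, so occurrences in short children are up-weighted.

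For part (1), I define a parent's total coefficient as the sum of the coefficients its children's losses receive. Each $\ell_{ij}$ is a normalized average, since its inner weights $m_{ijk}/n_{ij}$ sum to one. Under the row objective, each child therefore carries $1/M_s$, and parent $i$ totals $|\mathcal J_i|/M_s$, so two parents stand in the ratio of their child counts. Under the parent objective, child $j$ carries $n_{ij}/(P_sN_i)$. Summing over $j$ and using $\sum_j n_{ij}=N_i$ gives $1/P_s$ for every parent.

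Part (3) follows from the parent expansion. It depends on the children only through the multiset of pairs $(y,h)$ with $m=1$ and through $N_i$, which is the size of that multiset. Both are fixed by hypothesis, so any repartition leaves $\mathcal L_{\mathrm{parent}}$ unchanged. To show that $\mathcal L_{\mathrm{row}}$ is not invariant, I will give a counterexample. Take one parent with two occurrences of distinct losses $\lambda_1\neq\lambda_2$, plus a second parent with a single child, so that $M_s$ changes with the split. If the first parent is rendered as one child, the row objective is $\frac{1}{2}\bigl(\frac{\lambda_1+\lambda_2}{2}+\mu\bigr)$. If it is rendered as two children, the objective is $\frac{1}{3}(\lambda_1+\lambda_2+\mu)$. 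These generically differ, even in the parent's total mass ($1/2$ versus $2/3$).

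Part (4) is substitution: $\frac{1}{M_s}\sum_{i,j}\frac{M_s}{P_s}\frac{n_{ij}}{N_i}\ell_{ij}$ equals Eq.~\eqref{eq:app-parent-risk}. The only delicate step is in (3), namely stating precisely what is held fixed. The conditioning prefixes $\widetilde h$ must be unchanged so that each $\lambda$ is unchanged, and I will make this explicit. Positivity $n_{ij}>0$ ensures every quantity is well defined.
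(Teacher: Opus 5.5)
Your proposal is correct and follows the paper's proof essentially step for step: you sum child coefficients for (1), expand the child-mean loss into per-occurrence weights for (2), read the invariance of $\mathcal L_{\mathrm{parent}}$ off that expansion for (3), and substitute directly for (4). Your one addition is the explicit two-parent counterexample for the non-invariance of $\mathcal L_{\mathrm{row}}$, which the paper only asserts via the partition dependence of $1/(M_s n_{ij})$; it is valid, though what makes it work is $\lambda_1+\lambda_2\neq 2\mu$, not $\lambda_1\neq\lambda_2$.
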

\begin{proof}
(1) Sum the coefficients of the children of parent $i$ in each objective.

(2) Expand
$\ell_{ij} = \frac{1}{n_{ij}}\sum_k m_{ijk}\,\bigl(-\log \pi_\theta(y_{ijk}\mid \widetilde h_{ij,<k})\bigr)$
and multiply by the parent coefficients from (1).

(3) By (2),
$\mathcal L_{\mathrm{parent}} = \frac{1}{P_s}\sum_i \frac{1}{N_i}\sum_{u \in i}\bigl(-\log \pi_\theta(y_u \mid h_u)\bigr)$
depends only on the occurrence multiset and on $N_i$, neither of which
changes under regrouping; the row weight $1/(M_s n_{ij})$ depends on the
partition through $M_s$ and $n_{ij}$.

(4) Substitute $\omega_{ij}$:
$\frac{1}{M_s}\sum_{i,j}\frac{M_s}{P_s}\frac{n_{ij}}{N_i}\ell_{ij}
= \frac{1}{P_s}\sum_i\sum_j \frac{n_{ij}}{N_i}\ell_{ij}$.
\end{proof}

\begin{remark}
Item (1) is why long multi-turn interactions, which render into more
children, dominate $\mathcal L_{\mathrm{row}}$; the parent-normalization
row of Table~\ref{tab:controlled-analysis} measures the effect of
removing this reweighting on identical renderings and masks. Items (3)
and (4) are what make the objective independent of native rendering
details. Finite-run updates can still depend on sample order and
batching, as noted in Appendix~\ref{app:parent-normalization}.
\end{remark}

\section{Additional Experimental Analysis}
\label{app:additional-results}

The main text reports the five-scale paired study, representative public
agent-training baselines, and the matched 8B component controls.
This appendix complements those results in two ways.
First, it reports the complete public-checkpoint reference set evaluated
under our unified benchmark protocol.
Second, it provides additional analysis of the matched controls and
structural-support allocation.

\subsection{Complete Public-Checkpoint Reference Set}
\label{app:public-checkpoints}

The main tables report representative 8B checkpoints from the public
agent families, together with MUA-RL-32B.
Table~\ref{tab:public-checkpoint-summary} expands this comparison to all
sixteen public checkpoints in the current quantitative reference set.
All models use the benchmark populations and scoring rules described in
Appendix~\ref{app:protocol}, while model-native prompting and tool-call
interfaces are preserved where necessary.
These results therefore provide a common evaluation reference rather than
a matched training comparison.

\begin{table}[H]
\centering
\caption{\textbf{Complete public-checkpoint evaluation} (\%).
Tau scores are domain macro averages; Mean equally averages Tau1, Tau2,
and BFCL. \method{}-8B is shown for reference.}
\label{tab:public-checkpoint-summary}
\small
\setlength{\tabcolsep}{5.2pt}
\renewcommand{\arraystretch}{1.06}
\begin{tabular*}{\linewidth}{@{\extracolsep{\fill}}lrrrr@{}}
\toprule
Model & Tau1 & Tau2 & BFCL & Mean \\
\midrule
\multicolumn{5}{@{}l}{\textit{MUA-RL}} \\
\quad 8B & 34.65 & 25.83 & 58.17 & 39.55 \\
\quad 14B & 47.57 & 45.00 & 64.05 & 52.21 \\
\quad 32B & 54.04 & 45.00 & 72.48 & 57.17 \\
\midrule
\multicolumn{5}{@{}l}{\textit{Simia (Qwen3-8B)}} \\
\quad Tau-SFT & 36.61 & 42.50 & 43.08 & 40.73 \\
\quad Tau-SFT-RL & 39.17 & 41.67 & 42.50 & 41.11 \\
\quad AgentBench-SFT & 18.35 & 16.67 & 64.37 & 33.13 \\
\quad OfficeBench-SFT\textsuperscript{*}
& 19.04 & 9.17 & 65.82 & 31.34 \\
\midrule
\multicolumn{5}{@{}l}{\textit{Simia (Qwen2.5)}\textsuperscript{$\dagger$}} \\
\quad Tau-SFT (7B) & 39.74 & 35.00 & 39.73 & 38.16 \\
\quad Tau-SFT-RL (7B) & 32.87 & 33.33 & 41.03 & 35.75 \\
\quad Tau-SFT (32B) & 48.39 & 45.83 & 43.43 & 45.88 \\
\quad AgentBench-SFT (7B) & 12.04 & 14.17 & 42.25 & 22.82 \\
\quad OfficeBench-SFT (7B) & 19.48 & 15.00 & 41.23 & 25.24 \\
\quad OfficeBench-SFT-RL (7B) & 15.30 & 17.50 & 42.16 & 24.99 \\
\midrule
\multicolumn{5}{@{}l}{\textit{Other public agents}} \\
\quad LoopTool-8B & 23.57 & 24.17 & 68.66 & 38.80 \\
\quad EnvFactory-8B (thinking) & 40.13 & 26.67 & 77.55 & 48.12 \\
\quad MidTool-RL-8B & 24.87 & 29.17 & 73.73 & 42.59 \\
\midrule
\rowcolor{T2GTableBlue}
\textbf{\method{} (Qwen3-8B)}
& 51.13 & 74.17 & 78.62 & \textbf{67.97} \\
\bottomrule
\end{tabular*}

\smallskip
\begin{minipage}{\linewidth}
\footnotesize
\textsuperscript{*} The OfficeBench Qwen3-8B BFCL diagnostic uses modified
reasoning-token handling; its BFCL and derived Mean are excluded from the
main comparison. \textsuperscript{$\dagger$} Qwen2.5 uses its native
assistant prefix.
\end{minipage}
\end{table}

\subsection{Cross-Benchmark Behavior}

The complete checkpoint set reinforces the distinction between
function-calling accuracy and stateful task execution.
The Qwen3 Simia checkpoints trained on Tau trajectories transfer more
strongly to Tau than their AgentBench- or OfficeBench-trained counterparts.
LoopTool, EnvFactory, and MidTool obtain comparatively strong BFCL scores
without corresponding Tau2 performance.
These differences motivate reporting all three benchmarks rather than a
single function-calling score.

The largest separation in this reference set occurs on Tau2, where
\method{}-8B exceeds every re-evaluated public checkpoint, including the
larger MUA-RL variants.
Across Tau1, Tau2, and BFCL, it also attains the highest equal-weight mean
among the public checkpoints evaluated here.

\subsection{Matched Component Controls}
\label{app:controls}

The main-text ablation in Table~\ref{tab:ablation} reports four matched
Qwen3-8B comparisons.
Because the interventions operate on different stages of the construction
procedure, each comparison has its own matched reference; the resulting
gains are therefore not an additive decomposition of the full-method
improvement.

\paragraph{Targeted expansion.}
The reference uses the existing certified interaction corpus, while the
proposed condition adds experience synthesized around under-supported
structures before applying the same downstream selection and training
procedure.
This comparison changes candidate supply and therefore does not hold the
training-token budget fixed.

\paragraph{Decision grounding.}
The semantic interactions and policy-visible histories are held fixed,
while the set of supervised assistant decisions changes.
The comparison therefore isolates whether success-derived targets are
filtered by decision-time evidence.

\paragraph{Structural selection.}
Both conditions draw from the same eligible interaction corpus and retain
the same task and capability anchors, 14,819 semantic interactions, and
target-token budget.
Structural allocation covers 6,360 canonical paths, compared with 5,418
under random surplus selection.

\paragraph{Parent normalization.}
The selected semantic interactions, rendered examples, target masks, and
sample order are held fixed.
Only the aggregate optimization mass assigned to each semantic interaction
changes.

\subsection{Structural Coverage under a Matched Budget}
\label{app:structural-response}

The structural-selection control connects experience allocation with
structural coverage.
At the same budget of 14,819 semantic interactions and matched target-token
mass, structural selection covers 6,360 canonical paths, compared with
5,418 under random surplus selection: an increase of 942 paths.

The corresponding performance change is concentrated on the stateful Tau
benchmarks.
As reported in Table~\ref{tab:ablation}, structural selection increases
Tau1 and Tau2 while slightly decreasing BFCL, yielding a 2.32-point gain
in the equal-weight three-benchmark mean.
Together, these results suggest that the allocation rule primarily changes
which stateful interaction structures are represented, rather than simply
increasing the amount of supervision.

\subsection{Support-Depth Sensitivity}
\label{app:support-analysis}

Table~\ref{tab:support-depth} varies the structural-support depth at fixed
Qwen3-8B.
Changing depth alters both the number of retained interactions and the
amount of supervised target mass, so this experiment should be interpreted
as sensitivity to the resulting training allocation rather than as an
isolated causal effect of repeated path exposure.

The three-benchmark mean is highest at depth two and decreases as
additional support is allocated.
Tau2 accounts for most of this decline, while BFCL remains comparatively
stable.
The result reinforces the role of support depth as an allocation parameter:
more repeated support is not automatically better.
The support schedule used in the scaling study was specified independently
of these sensitivity outcomes.

\FloatBarrier
\section{Qualitative Evidence}
\label{app:qualitative}

The examples illustrate behavior rather than estimate its frequency or
causally attribute improvements to individual components. They retain both
successful interactions and a remaining failure.

Appendix~\ref{app:baseline-case-comparison} walks through the same-task
comparison of Figure~\ref{fig:trajectory2gym-baseline-comparison},
Appendix~\ref{app:native-interaction-cases} shows how grounded
supervision treats a recorded training interaction
(Figure~\ref{fig:trajectory2gym-grounded-recovery}), and
Appendix~\ref{app:case-studies} shows six further paired cases against
re-evaluated checkpoints, including a remaining failure.

\subsection{Full Same-Task Comparison}
\label{app:baseline-case-comparison}

Figure~\ref{fig:trajectory2gym-baseline-comparison} compares three agents on
a Retail workflow requiring updates to a laptop order and a watch order.
The requested shipping address appears in an earlier order observation,
while the user profile contains a different address.
Completing the request requires both reusing that evidence and updating the
shipping address before an item modification changes the order status.

\stitle{Simia Tau-SFT.}
After observing the requested address, Simia repeatedly queries the user
profile, which returns the other address. It transfers the request to a
human without completing either update.

\stitle{MUA-RL-8B.}
The model first modifies the laptop item in order \texttt{\#W3730488}.
The returned status is \texttt{pending (item modified)}. Its subsequent
New York address update is rejected with \texttt{non-pending order cannot
be modified}; a later cancellation attempt is rejected as well. Changing
the user profile to New York does not change this order: subsequent order
reads still return Houston. The native task reward is 0.

\stitle{\method{}.}
The agent obtains confirmation, applies the retrieved address while the
order remains pending, and then updates the laptop and watch items.
The native task succeeds, whereas both external-baseline trajectories fail.

These independently generated dialogues share a task and system policy,
but not identical observation histories. The selected case illustrates
evidence use and prerequisite ordering, rather than estimating failure
frequency or proving a causal component effect.

\subsection{Decision-Level Grounding Example}
\label{app:native-interaction-cases}

Table~\ref{tab:app-retail-recovery-mask} shows the successful Retail
training interaction of Figure~\ref{fig:trajectory2gym-grounded-recovery},
the scenario that opens Section~\ref{sec:introduction}, which contains
one failed lookup followed by grounded recovery and several dependent
state-changing operations.
The failed call remains visible but receives no imitation loss; subsequent
decisions are supervised once their required evidence becomes visible.

\begin{figure}[htbp]
  \centering
  \includegraphics[width=\linewidth]{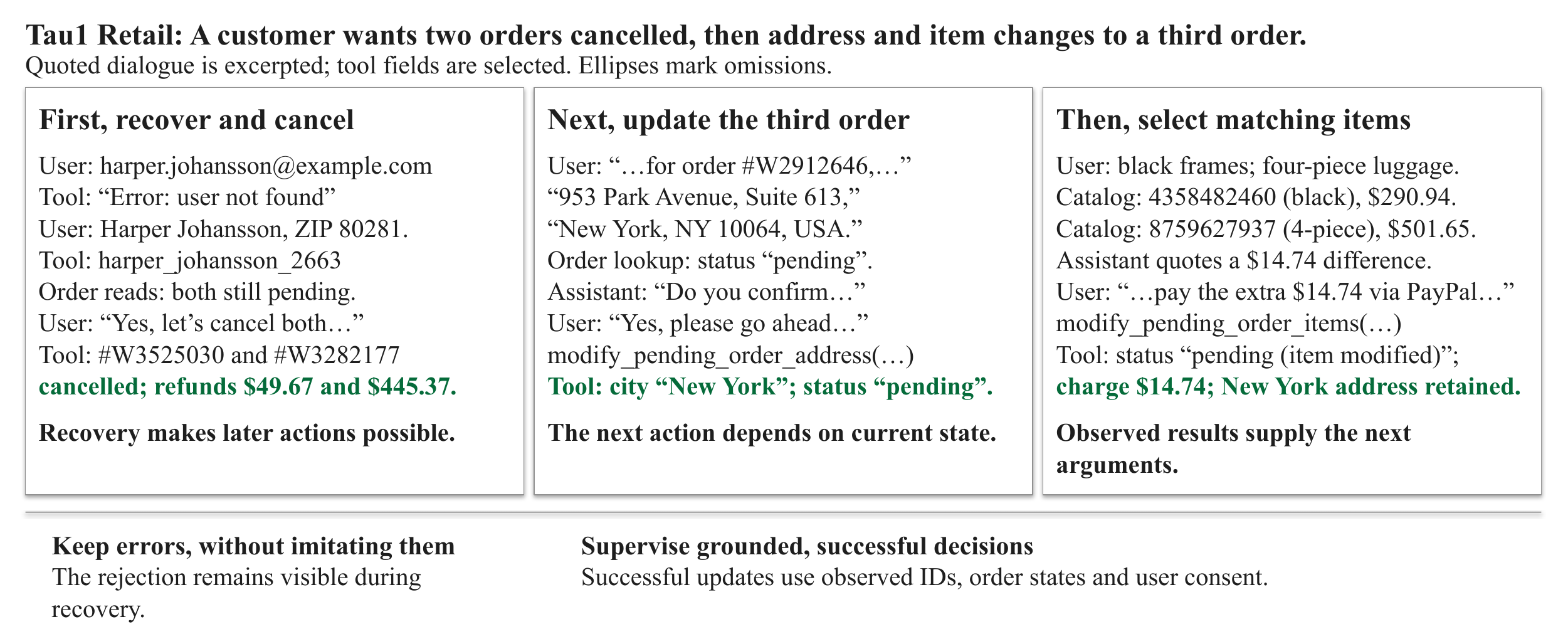}
  \caption{\textbf{Grounded supervision within a compositional interaction.}
  A failed email lookup remains visible context without receiving imitation
  loss. Subsequent visible evidence supports identity recovery, cancellation,
  address modification, and catalog-grounded item updates.
  Dialogue and tool fields are abridged for display.}
  \label{fig:trajectory2gym-grounded-recovery}
\end{figure}

\begin{table}[htbp]
\centering
\caption{Decision-level supervision in the multi-order Retail case.
The failed lookup uses an email supplied by the user: its argument is
visible, but its execution fails. The call and error remain context;
user messages and tool responses receive no imitation loss.}
\label{tab:app-retail-recovery-mask}
\small
\setlength{\tabcolsep}{4pt}
\begin{tabularx}{\linewidth}{@{}rP{0.25\linewidth}XP{0.16\linewidth}@{}}
\toprule
Decision & Assistant decision & Preceding visible evidence & Supervision \\
\midrule
0 & Ask for identity & User requests cancellations and updates & Supervised \\
1 & Lookup by email & User-supplied email; tool returns user not found & Context only \\
2--4 & Recover identity & Error, then user-supplied name and ZIP 80281 & Supervised \\
5--7 & Read two orders; confirm cancellations & User gives both order IDs; reads show pending status & Supervised \\
8--9 & Cancel both orders & User explicitly confirms both cancellations & Supervised \\
10--14 & Report results; confirm and change address & Third order is pending; user supplies and confirms the address & Supervised \\
15--17 & Read two catalogs; confirm item changes & Requested variants and PayPal ID from user; IDs and prices from catalogs & Supervised \\
18 & Modify both items & Catalog-derived replacements; user confirms the \$14.74 difference & Supervised \\
19 & Report updated order & Returned item IDs, address, status, and payment entry & Supervised \\
\bottomrule
\end{tabularx}
\end{table}

The agent recovers identity, confirms and cancels two pending orders,
then updates another order's address before changing its items.
Catalog observations supply the replacement identifiers, and user
confirmation authorizes the consequential changes.
The case illustrates how one visible interaction can contain both masked
failure and valid supervision for evidence-dependent recovery.

\FloatBarrier
\subsection{Additional Behavior Cases}
\label{app:case-studies}

The six cases in Figures~\ref{fig:case-intent-update}--\ref{fig:case-remaining-failure}
illustrate changing intent, feedback use, dependent operations,
state prerequisites, observation binding, and a remaining failure.
Each compares a named external agent with \method{} on the same task,
using their respective trained checkpoints. Dialogues evolve independently,
so these cases illustrate behavior rather than isolate the cause of an
aggregate performance difference. The figures excerpt dialogue and selected
tool fields, with omissions marked; $m$ is the original message index and
``log'' is the BFCL source-log index. Case-specific source paths, full
records, and native outcomes are retained in the accompanying JSON.

\begin{figure}[H]
  \centering
  \includegraphics[page=1,width=\linewidth,height=0.78\textheight,keepaspectratio]{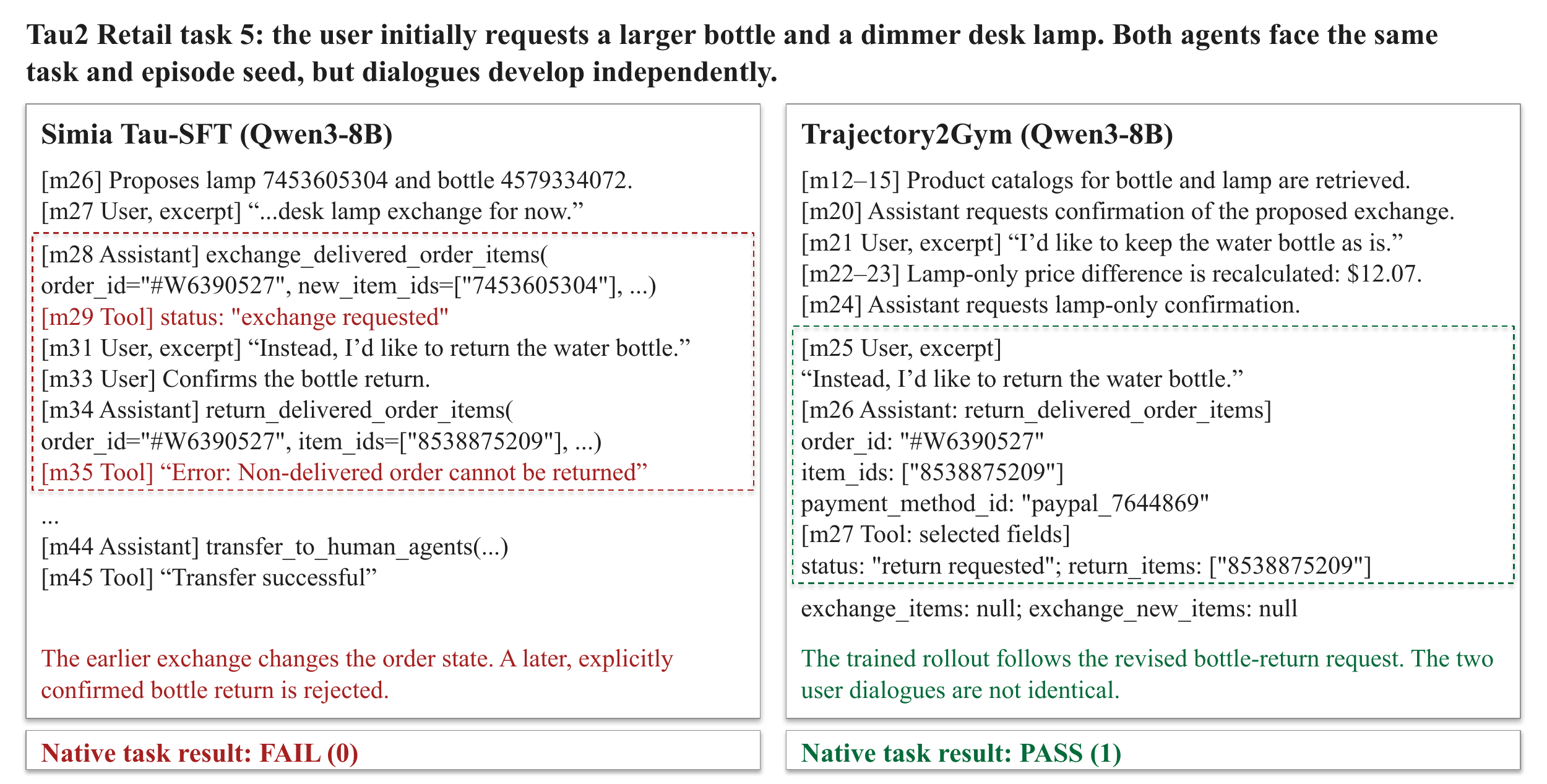}
  \caption{\textbf{Changing intent during a stateful interaction.}
  Simia submits an exchange before a later return request, which the changed
  order state prevents. In the independently generated \method{} dialogue,
  the user changes the request before submission and the return succeeds.
  The different histories make this a behavior illustration, not an
  identical-context comparison.}
  \label{fig:case-intent-update}
\end{figure}

\begin{figure}[H]
  \centering
  \includegraphics[page=2,width=\linewidth,height=0.78\textheight,keepaspectratio]{figures/trajs2gym-case-studies.pdf}
  \caption{\textbf{Delegating phone-side actions and using feedback.}
  MUA-RL repeatedly attempts phone-side operations unavailable to the
  assistant. \method{} delegates those operations to the user and adapts
  its continuation to the reported device state and connection speed,
  completing the task.}
  \label{fig:case-feedback-recovery}
\end{figure}

\begin{figure}[H]
  \centering
  \includegraphics[page=3,width=\linewidth,height=0.78\textheight,keepaspectratio]{figures/trajs2gym-case-studies.pdf}
  \caption{\textbf{Recovery enables dependent operations.}
  After a route error, Simia proposes a lookup but does not complete the
  booking. \method{} retrieves the needed airport codes, books the flight,
  and reuses the returned booking identifier for invoice and support calls.
  Only the \method{} interaction satisfies the native task checker.}
  \label{fig:case-dependent-tools}
\end{figure}

\begin{figure}[H]
  \centering
  \includegraphics[page=4,width=\linewidth,height=0.78\textheight,keepaspectratio]{figures/trajs2gym-case-studies.pdf}
  \caption{\textbf{An early return blocks later exchange attempts.}
  LoopTool submits a return before confirmation, leaving later exchange
  and cancellation attempts invalid. \method{} preserves the delivered
  state while obtaining confirmation, then submits the user-selected
  exchange.}
  \label{fig:case-state-prerequisite}
\end{figure}

\begin{figure}[H]
  \centering
  \includegraphics[page=5,width=\linewidth,height=0.78\textheight,keepaspectratio]{figures/trajs2gym-case-studies.pdf}
  \caption{\textbf{A correct fare does not complete the workflow.}
  Both agents report the correct fare, but Simia does not execute the
  requested booking. \method{} books the flight and uses its returned
  identifier for the subsequent insurance operation, completing the
  dependent workflow.}
  \label{fig:case-observation-binding}
\end{figure}

\begin{figure}[H]
  \centering
  \includegraphics[page=6,width=\linewidth,height=0.78\textheight,keepaspectratio]{figures/trajs2gym-case-studies.pdf}
  \caption{\textbf{Remaining failure against a successful external baseline.}
  Simia completes payment, line resumption, and device repair.
  \method{} transfers after sending a payment request, leaving the
  connectivity and overdue-bill checks unsatisfied.
  This case retains an external-agent success and a \method{} failure.}
  \label{fig:case-remaining-failure}
\end{figure}

\FloatBarrier

\end{document}